
\documentclass[twoside,12pt]{article}
\usepackage{latexsym,amsthm,amssymb,amsmath}
\topmargin=-10mm  \oddsidemargin=0mm \evensidemargin=0mm
\textwidth=16cm \textheight=22cm
\usepackage{url}
\usepackage{algorithm}
\usepackage{algorithmic}
\usepackage{stmaryrd}
\sloppy\lineskip=0pt

\newenvironment{keywords}{\centerline{\bf\small
Keywords}\begin{quote}\small}{\par\end{quote}\vskip 1ex}
\def\citep{\cite}
\newtheorem{theorem}{Theorem}
\newtheorem{definition}[theorem]{Definition}
\newtheorem{corollary}[theorem]{Corollary}
\newtheorem{lemma}[theorem]{Lemma}
\newtheorem{proposition}[theorem]{Proposition}
\def\acks#1{\paragraph{Acknowledgements.} #1}

\renewcommand{\a}{\alpha}
\newcommand{\e}{{\rm e}}

\newcommand{\cM}{\mathcal{M}}
\newcommand{\cS}{\mathcal{S}}
\newcommand{\cC}{\mathcal{C}}

\newcommand{\cA}{\mathcal{A}}
\newcommand{\cB}{\mathcal{B}}
\newcommand{\cP}{\mathcal{P}}
\newcommand{\cL}{\mathcal{L}}
\newcommand{\cD}{\mathcal{D}}
\newcommand{\cU}{\mathcal{U}}
\newcommand{\labelop}[2]{\displaystyle_{#1}^{\hspace{-0.15em}(#2)}}


\title{\vspace{-4ex}
\vskip 2mm\bf\Large\hrule height5pt \vskip 4mm
Online Learning of k-CNF Boolean Functions
\vskip 4mm \hrule height2pt}
\author{
\begin{minipage}{0.45\textwidth}\center
{\bf Joel Veness}\\[3mm]
\normalsize DeepMind Technologies Ltd \\
\normalsize University of Alberta \\
\normalsize \texttt{joel@deepmind.com}
\end{minipage}
\hfill
\begin{minipage}{0.45\textwidth}\center
{\bf Marcus Hutter}\\[3mm]
\normalsize Research School of Computer Science \\
\normalsize Australian National University \\
\normalsize \texttt{http://www.hutter1.net/}
\end{minipage}
\vspace{2ex}
}
\date{March 2014}

\begin{document}
\maketitle

\begin{abstract}%
This paper revisits the problem of learning a k-CNF Boolean function from examples in the context of online learning under the logarithmic loss.
In doing so, we give a Bayesian interpretation to one of Valiant's celebrated PAC learning algorithms, which we then build upon to derive two efficient, online, probabilistic, supervised learning algorithms for predicting the output of an unknown k-CNF Boolean function.
We analyze the loss of our methods, and show that the cumulative log-loss can be upper bounded, ignoring logarithmic factors, by a polynomial function of the size of each example.
\end{abstract}

\begin{keywords}
  k-CNF, Online Learning, Logarithmic Loss
\end{keywords}

\section{Introduction}

In 1984, Leslie Valiant introduced the notion of Probably Approximately Correct (PAC) learnability, and gave three important examples of some non-trivial concept classes that could be PAC learnt given nothing more than a sequence of positive examples drawn from an arbitrary IID distribution \citep{Valiant:1984:TL:1968.1972}.
One of these examples was the class of $k$-CNF Boolean functions.
Valiant's approach relied on a polynomial time reduction of this problem to that of PAC learning the class of monotone conjunctions.
In this paper, we revisit the problem of learning monotone conjunctions from a different viewpoint.
This will allow us to derive two new online, probabilistic prediction algorithms that: (i) learn from both positive and negative examples; (ii) avoid making IID assumptions; (iii) suffer low logarithmic loss for arbitrary sequences of examples; (iv) run in polynomial time and space.

Our motivation for investigating this setting comes from our interest in building predictive ensembles for universal source coding.
In particular, we are interested in prediction methods that satisfy the following \emph{power} desiderata, i.e. methods which:
(p) make probabilistic predictions;
(o) are strongly online;
(w) work well in practice;
(e) are efficient;
(r) and have well understood regret/loss/redundancy properties.
Methods satisfying these properties can be combined in a principled fashion using techniques such as those discussed by \cite{VenessSH12,Mattern:2013:LGM:2495257.2495873}, giving rise to ensembles with clearly interpretable predictive capabilities.

Our contribution stems from noticing that Valiant's method can be interpreted as a kind of MAP model selection procedure with respect to a particular family of priors.
While this observation is in itself unremarkable, we show that this family of priors possess a number of desirable computational properties.
In particular, we show that given $n$ positive examples, it is possible to perform exact Bayesian inference over the $2^d$ possible hypotheses in time $O(nd)$ and space $O(d)$.
Unfortunately, these desirable computational properties do not seem to readily extend to the case where both positive and negative examples are presented.
A workable heuristic approach might be to directly use the efficient Bayesian predictor for prediction, only updating the posterior weights when positive instances are seen.
However, as well as needlessly throwing away potentially useful information, this approach also makes it impossible to provide meaningful loss guarantees that hold with respect to all input sequences.
Our first contribution is to introduce a hybrid algorithm, which uses a combination of Bayesian inference and memorization to construct a polynomial time algorithm whose loss is bounded by $O(d^2)$ for the class of monotone conjunctions.
Our second contribution is a more practical algorithm, which requires less space and whose loss is bounded by $O(d \log n)$.
Finally, similarly to Valiant, we describe how to combine our algorithms with a reduction that (for fixed $k$) enables the efficient learning of $k$-CNF Boolean functions from examples.

\section{Preliminaries}\label{sec:preliminaries}

We first introduce some notation to formalize our problem setting.

\paragraph{Notation.}
A Boolean variable $x$ is an element of $\cB := \{ \bot, \top \} = \{ 0, 1\}$.
We identify false $\bot$ with 0 and true $\top$ with 1,
since it allows us to use Boolean functions as likelihood functions for deterministically generated data.
We keep the boolean operator notation whenever more suggestive.
The unary not operator is denoted by $\neg$, and is defined as $\neg : 0 \mapsto 1;  1 \mapsto 0$ ($\neg x=1-x$).
The binary conjunction and disjunction operators are denoted by $\wedge$ and $\vee$ respectively, and are given by the maps $\wedge : (1,1) \mapsto 1; \text{~or~} 0 \text{~otherwise}$ ($x\wedge y = x\cdot y$).
and $\vee : (0,0) \mapsto 0; \text{~or~} 1 \text{~otherwise}$ ($x\vee y = \max\{x,y\}$).
A literal is a Boolean variable $x$ or its negation $\neg x$; a positive literal is a non-negated Boolean variable.
A clause is a finite disjunction of literals.
A monotone conjunction is a conjunction of zero or more positive literals. For example, $x_1 \wedge x_3 \wedge x_6$ is a monotone conjunction, while $\neg x_1 \wedge x_3$ is not.
We adopt the usual convention with conjunctions of defining the zero literal case to be  vacuously true.
The power set of a set $\cS$ is the set of all subsets of $\cS$, and will be denoted by $\cP(\cS)$.
For convenience, we further define $\cP_d := \cP(\{ 1, 2, \dots, d \})$.
We also use the Iverson bracket notation $\llbracket P \rrbracket$, which given a predicate $P$, evaluates to 1 if $P$ is true and 0 otherwise.
With our identification this is optional but useful syntactic sugar.
We also use the notation $x_{1:n}$ and $x_{<n}$ to represent the sequences of symbols $x_1 x_2 \dots x_n$ and $x_1 x_2 \dots x_{n-1}$ respectively.
Furthermore, base two is assumed for all logarithms in this paper.
Finally, we use the notation $a^i$ to index the $i$th component of a Boolean vector $a\in\cB^d$.

\paragraph{Problem Setup.}
We consider an online, sequential, binary, probabilistic prediction task with side information.
At each time step $t\in\mathbb{N}$, a $d$-dimensional Boolean vector of side information $a_t\equiv(a_t^1,...,a_t^d)\in\cB^d$ is presented to
a probabilistic predictor $\rho_t : \cB^d \to (\cB \to [0,1])$, which outputs a probability distribution over $\cB$.
A label $x_t\in\cB$ is then revealed, with the predictor suffering an instantaneous loss of $\ell_t := -\log \rho_t(x_t;a_t)$, with the cycle continuing ad infinitum.
It will also prove convenient to introduce the joint distribution $\rho(x_{1:n};a_{1:n})$, which lets us express the cumulative loss $\cL_n(\rho)$ in the form
\begin{equation*}
  \cL_n(\rho) ~:=~ \sum_{i=1}^n \ell_t  ~=~ -\log \prod_{t=1}^n \rho_t(x_t;a_t) ~=:~ -\log \rho(x_{1:n};a_{1:n})
\end{equation*}
We later use the above quantity to analyze the theoretical properties of our technique.
As is usual with loss or regret based approaches, our goal will be to construct a predictor $\rho$ such that $\cL_n(\rho) / n \to 0$ as $n \to \infty$ for an interesting class of probabilistic predictors $\cM$.
The focus of our attention for the remainder of this paper will be on the class of monotone conjunctions.

\paragraph{Brute force Bayesian learning.}
Consider the monotone conjunction $h_\cS(a_t):=\bigwedge_{i\in\cS} a_t^i$ for some $\cS\in\cP_d$, classifying $a_t\in\cB^d$ as $h_\cS(a_t)\in\cB$.
This can be extended to the function $h_\cS : \cB^{n \times d} \to \cB^n$ that returns the vector
$h_\cS(a_{1:n}):=(h_\cS(a_1),...,h_\cS(a_n))$.
One natural Bayesian approach to learning monotone conjunctions would be to place a uniform prior over the set of $2^d$ possible deterministic predictors that are monotone conjunctions of the $d$ Boolean input variables.
This gives the Bayesian mixture model
\begin{equation}\label{eq:monotone_mixture}
\xi_d(x_{1:n};a_{1:n}) ~:=~ \sum_{\cS\in\cP_d} \frac{1}{2^{d}} \nu_\cS(x_{1:n};a_{1:n}),~~~
\text{where}~~~ \nu_\cS(x_{1:n};a_{1:n}) ~:=~ \left\llbracket h_\cS(a_{1:n}) = x_{1:n} \right\rrbracket
\end{equation}
is the deterministic distribution corresponding to $h_\cS$.
Note that when $\cS=\{\}$, the conjunction $\bigwedge_{i\in\cS} a^i_t$ is vacuously true.
From here onwards, we will say hypothesis $h_\cS$ generates $x_{1:n}$ if $h_\cS(a_{1:n}) = x_{1:n}$.
For sequential prediction, the predictive probability $\xi_d(x_t|x_{<t};a_{1:t} )$ can be obtained by computing the ratio of the marginals, that is
$\xi_d(x_t|x_{<t};a_{1:t} ) = \xi_d(x_{1:t};a_{1:t} )~/~\xi_d(x_{<t};a_{<t} )$.
Note that this form of the predictive distribution is equivalent to using Bayes rule to explicitly compute the posterior weight for each $\cS$, and then taking a convex combination of the instantaneous predictions made by each hypothesis.

The loss of this approach for an arbitrary sequence of data generated by some $h_{\cS^*}$ for $\cS^*\in\cP_d$, can be upper bounded by
\begin{eqnarray*}
\cL_n(\xi_d) ~:=~ -\log \xi_d(x_{1:n};a_{1:n})
&=& -\log \sum_{\cS\in\cP_d} \tfrac{1}{2^{d}} \left\llbracket h_\cS(a_{1:n}) = x_{1:n} \right\rrbracket \\
&\leq&  -\log \tfrac{1}{2^{d}} \left\llbracket h_{\cS^*}(a_{1:n}) = x_{1:n} \right\rrbracket
~=~ d.
\end{eqnarray*}
Of course the downside with this approach is that a naive computation of Equation~\ref{eq:monotone_mixture} takes time $O(n \, 2^d)$.
Indeed one can show that no polynomial-time algorithm in $d$ for
$\xi_d$ exists (assuming P$\neq$NP):
\begin{theorem}[$\xi_d$ is \#P-complete]\label{thm:xidhard}
Computing the function $f:\{0,1\}^{n\times d}\to\{0,...,2^d\}$ defined as
$f(a_{1:n}):=2^d\xi_d(0_{1:n};a_{1:n})$ is \#P-complete.
\end{theorem}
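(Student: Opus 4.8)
The plan is to establish both membership in \#P and \#P-hardness, after first giving $f$ a transparent combinatorial reading. First I would rewrite the target quantity. Since $\xi_d(0_{1:n};a_{1:n}) = 2^{-d}\sum_{\cS\in\cP_d}\llbracket h_\cS(a_{1:n})=0_{1:n}\rrbracket$, the function $f(a_{1:n})$ simply counts the subsets $\cS\subseteq\{1,\dots,d\}$ for which the monotone conjunction $h_\cS$ outputs $0$ on every example. Observing that $h_\cS(a_t)=\bigwedge_{i\in\cS}a_t^i$ equals $0$ exactly when $\cS$ contains some coordinate at which $a_t$ is false, I would introduce the ``zero sets'' $Z_t:=\{\,i : a_t^i=0\,\}$ and note that $h_\cS(a_t)=0$ iff $\cS\cap Z_t\neq\emptyset$. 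Hence $f(a_{1:n})$ is precisely the number of subsets of $\{1,\dots,d\}$ that intersect (``hit'') every $Z_t$, equivalently the number of satisfying assignments of the monotone CNF $\bigwedge_{t=1}^n\bigvee_{i\in Z_t}y_i$.

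Membership in \#P is then immediate: the predicate $\llbracket h_\cS(a_{1:n})=0_{1:n}\rrbracket$ is decidable in time $O(nd)$, and each witness $\cS$ is describable in $d\le nd$ bits, so $f$ counts exactly the accepting witnesses of a polynomial-time verifier.

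For hardness I would reduce from a classical \#P-complete counting problem that already has the ``hitting set'' shape produced by this encoding, namely counting the satisfying assignments of a monotone $2$-CNF formula --- equivalently, counting the vertex covers of a graph --- which Valiant showed to be \#P-complete. Given a graph $G=(V,E)$ with $V=\{1,\dots,d\}$, I would build one example per edge: for $e=\{u,v\}\in E$ let $a_e$ be the Boolean vector that is $0$ in coordinates $u$ and $v$ and $1$ elsewhere, so that $Z_e=\{u,v\}$. A set $\cS$ then hits every $Z_e$ iff it is a vertex cover of $G$, so $f(a_{1:|E|})$ equals the number of vertex covers of $G$ exactly. This map is computable in polynomial time and is parsimonious, so an algorithm for $f$ would solve a \#P-complete problem; combined with membership this yields \#P-completeness.

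The main obstacle is conceptual rather than technical: it lies in recognizing the correct combinatorial identity --- that fixing the label sequence to all zeros turns the count of generating hypotheses into a hitting-set (monotone-CNF) model count --- and in pairing it with a base problem whose instances already exhibit the two-element clause structure that a graph's edges provide. Once that correspondence is in place, both the verifier needed for membership and the parsimonious reduction needed for hardness are routine, so I expect no genuine difficulty in the remaining details.
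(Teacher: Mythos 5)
Your proof is correct, and it is instructive to see how it relates to the paper's. The underlying hard instances are in fact identical: unrolling the paper's two-step chain IS\,$\to$\,UPOW\,$\to$\,$\xi_d$ (with $\cS_t := V\setminus e_t$ and $a_t^i := \llbracket i\in\cS_t\rrbracket$) produces exactly your vectors $a_e$ that are $0$ at the two endpoints of each edge and $1$ elsewhere, and the consistent hypotheses $\{\cS : h_\cS(a_{1:n})=0_{1:n}\}$ are exactly the vertex covers, i.e.\ the complements of independent sets. What differs is the packaging, and yours is leaner on both counts. For hardness, you reduce parsimoniously from \#monotone-2-SAT (equivalently \#vertex-cover, Valiant 1979), so $f$ equals the target count on the nose; the paper instead reduces subtractively from \#independent-set (citing Vadhan), passing through the intermediate problem UPOW and needing the complementation identity $|\text{IS}| + |\cP(\cS_1)\cup\dots\cup\cP(\cS_n)| = 2^{|V|}$ together with Lemma~\ref{lem:xiupow}'s relation $A = 2^d[1-\xi_d(0_{1:n}|a_{1:n})]$. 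For membership, your argument is the more direct one: since $f(a_{1:n}) = \sum_{\cS\in\cP_d}\llbracket h_\cS(a_{1:n})=0_{1:n}\rrbracket$ is literally a count of poly-time-verifiable witnesses $\cS\in\{0,1\}^d$, membership in \#P is immediate, whereas the paper first shows the complementary count $u\in\#\text{P}$ via a guess-and-check machine and then invokes the swap of accepting and non-accepting paths (valid here because the machine has exactly $2^d$ paths). What the paper's route buys is the isolation of UPOW as a clean standalone counting problem (credited to McKay) that may be of independent interest, and a hardness source (Vadhan) that holds even for sparse, regular, and planar graphs; your route buys a parsimonious reduction and a one-line membership proof. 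One small point of care: your identification of $f$ with monotone-CNF model counting via the zero sets $Z_t = \{i : a_t^i = 0\}$ is exactly right ($h_\cS(a_t)=0$ iff $\cS\cap Z_t\neq\emptyset$), and it makes both directions of the completeness claim transparent in a way the paper's presentation does not.
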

We prove hardness by a two-step reduction: counting independent sets, known to be
\#P-hard, to computing the cardinality of a union of power sets to computing $\xi_d$:

\begin{definition}[UPOW]\label{def:upow}
Given a list of $n$ subsets $\cS_1,\dots,\cS_n$ of $\{1,\dots,d\}$,
compute $A := |\cP(\cS_1)\cup\dots\cup\cP(\cS_n)|$, i.e.\ the size of the union of the power sets of $\cS_1,\dots,\cS_n$.
\end{definition}

\begin{lemma}[UPOW$\to\xi_d$]\label{lem:xiupow}
If $a_t$ is defined as the $d$-dimensional characteristic bit vector describing
the elements in $\cS_t$, i.e.\ $a_t^i:=\llbracket i\in \cS_t\rrbracket$, then
$A = 2^d[1-\xi_d(0_{1:n}|a_{1:n})]$.
\end{lemma}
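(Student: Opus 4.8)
The plan is to evaluate the mixture $\xi_d(0_{1:n};a_{1:n})$ directly, by rewriting the event that a hypothesis $h_\cS$ generates the all-zeros sequence as a statement about set containment, and then observing that the subsets $\cS$ which \emph{fail} to generate $0_{1:n}$ are exactly those lying in $\cP(\cS_1)\cup\dots\cup\cP(\cS_n)$.

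First I would unpack $h_\cS(a_t)$ under the chosen encoding. Since $a_t^i=\llbracket i\in\cS_t\rrbracket$, the conjunction $h_\cS(a_t)=\bigwedge_{i\in\cS}a_t^i$ equals $1$ precisely when every $i\in\cS$ satisfies $i\in\cS_t$, i.e.\ when $\cS\subseteq\cS_t$, equivalently $\cS\in\cP(\cS_t)$. The vacuous case $\cS=\{\}$ is consistent, as the empty conjunction is true and $\emptyset\in\cP(\cS_t)$ for every $t$. Negating, $h_\cS(a_t)=0$ iff $\cS\notin\cP(\cS_t)$.

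Next I would lift this to the whole sequence. By definition of $\nu_\cS$, the hypothesis $h_\cS$ generates $0_{1:n}$ iff $h_\cS(a_t)=0$ for every $t\in\{1,\dots,n\}$, which by the previous step is equivalent to $\cS\notin\cP(\cS_t)$ for all $t$, i.e.\ $\cS\notin\bigcup_{t=1}^n\cP(\cS_t)$. Substituting into the mixture definition (\ref{eq:monotone_mixture}) then gives
\[
\xi_d(0_{1:n};a_{1:n}) ~=~ \frac{1}{2^d}\sum_{\cS\in\cP_d}\left\llbracket \cS\notin\textstyle\bigcup_{t=1}^n\cP(\cS_t)\right\rrbracket ~=~ \frac{1}{2^d}\bigl(2^d-A\bigr),
\]
since there are $2^d$ subsets of $\{1,\dots,d\}$ in total and, by definition of $A$, exactly $A$ of them lie in the union. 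Rearranging yields $A=2^d\bigl[1-\xi_d(0_{1:n};a_{1:n})\bigr]$, which is the claim.

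The argument is essentially bookkeeping, so there is no deep obstacle; the only point that needs care is getting the direction of the containment right. The conjunction indexed by $\cS$ fires on input $a_t$ exactly when $\cS$ is a \emph{subset} of $\cS_t$, so the subsets counted by $A$ are precisely those on which some hypothesis fires, i.e.\ the complement of the collection contributing to $\xi_d(0_{1:n};a_{1:n})$. Once that correspondence (together with the treatment of $\emptyset$) is pinned down, the counting and the final rearrangement are immediate.
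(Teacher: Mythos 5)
Your proof is correct and follows essentially the same route as the paper's: the key observation $h_\cS(a_t)=1 \iff \cS\subseteq\cS_t \iff \cS\in\cP(\cS_t)$, lifted over all $t$ to identify the hypotheses generating $0_{1:n}$ with the complement of $\cP(\cS_1)\cup\dots\cup\cP(\cS_n)$, and then a count over the uniform mixture giving $\sum_{\cS\in\cP_d}\nu_\cS(0_{1:n}|a_{1:n})=2^d-A$. Your write-up merely makes explicit two points the paper leaves implicit (the vacuous case $\cS=\{\}$ and the final rearrangement), so no substantive difference.
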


\begin{proof}
Since $h_\cS(a_t)\!=\!1$ iff $\cS\!\subseteq\!\cS_t$ iff $\cS\!\in\!\cP(\cS_t)$
we have
\begin{equation*}
  \llbracket h_\cS(a_{1:n})=0_{1:n} \rrbracket
  ~\iff~ \bigwedge_{t=1}^n [h_\cS(a_t)=0]
  ~\iff~ \neg\exists t:\cS\in\cP(\cS_t)
  ~\iff~ \cS\not\in\cP(\cS_1)\cup...\cup\cP(\cS_n)
\end{equation*}
which implies $\sum_{\cS\in\cP_d}\nu_\cS(0_{1:n}|a_{1:n})=2^d-A$.
\end{proof}

The intuition behind Lemma \ref{lem:xiupow} is that since $\xi_d$ uses a uniform prior over $\cP_d$, the number of hypotheses consistent with the data is equal to $2^d \xi_d(0_{1:n}|a_{1:n})$, and therefore the number of hypotheses inconsistent with the data is equal to $2^d[1-\xi_d(0_{1:n}|a_{1:n})]$.
One can easily verify that the set of hypotheses inconsistent with a single negative example is
$I_t := \cP \left( \left \{ i \in \{1,\dots, d\} \,:\, \llbracket a^i_t = 1 \rrbracket \right \} \right)$,
hence the set of hypotheses inconsistent with the data is equal to $\left | \cup_{t=1}^n I_t \right |$.

\begin{theorem}[IS$\to$UPOW, Brendan McKay, private communication]\label{thm:upowhard}
UPOW is \#P-hard.
\end{theorem}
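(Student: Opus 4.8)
The plan is to reduce the counting of independent sets (IS), which is \#P-hard, to UPOW. Given a graph $G=(V,E)$ with vertex set $V=\{1,\dots,d\}$ and edge set $E=\{e_1,\dots,e_n\}$, to each edge $e_t=\{u_t,v_t\}$ I associate the subset $\cS_t := \{1,\dots,d\}\setminus\{u_t,v_t\}$ obtained by deleting the two endpoints. This produces, in time polynomial in the size of $G$, a UPOW instance consisting of $n=|E|$ subsets of $\{1,\dots,d\}$.

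The crux is a complementation bijection $U \mapsto W := V\setminus U$ on $\cP_d$. For a fixed edge $e_t$ one has $\{u_t,v_t\}\subseteq U$ iff $\{u_t,v_t\}\cap W=\emptyset$ iff $W\subseteq \cS_t$ iff $W\in\cP(\cS_t)$. Now a set $U$ fails to be independent exactly when it contains some edge, i.e.\ $\{u_t,v_t\}\subseteq U$ for at least one $t$; under the bijection this says precisely that $W\in\bigcup_{t=1}^n \cP(\cS_t)$. (Equivalently, such a $W$ is then not a vertex cover of $G$.) Hence $A=\bigl|\bigcup_{t=1}^n \cP(\cS_t)\bigr|$ equals the number of non-independent subsets of $V$, so that $A = 2^d - I(G)$, where $I(G)$ denotes the number of independent sets of $G$.

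It follows that a single UPOW query together with the arithmetic $I(G)=2^d-A$ recovers the independent-set count, all in polynomial time; since IS counting is \#P-hard, so is UPOW.

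The main thing to get right is orienting the correspondence: the ``contains an edge'' condition defining non-independent sets is upward closed, whereas each $\cP(\cS_t)$ is downward closed, so the complementation step is what reconciles the two and lets a union of power sets count the non-independent sets. Once this bijection is fixed, the counting identity and the polynomial bound on the reduction are immediate, and it only remains to observe that a subtractive Turing reduction of this form suffices to transfer \#P-hardness.
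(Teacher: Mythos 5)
Your proof is correct and is essentially identical to the paper's: the same construction $\cS_t := V\setminus e_t$, the same complementation bijection $U\mapsto V\setminus U$ showing that non-independent sets correspond exactly to members of $\bigcup_{t}\cP(\cS_t)$, and the same counting identity $|\text{IS}| + A = 2^{|V|}$. Your added remark about the upward/downward closure being reconciled by complementation is a nice piece of intuition the paper leaves implicit, but the argument itself is the same.
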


\begin{proof}
Let $G=(V,E)$ be an undirected graph with vertices $V=\{1,...,d\}$
and edges $E=\{e_1,...,e_n\}$, where edges are $e=\{v,w\}$ with
$v,w\in V$ and $v\neq w$. An independent set $I$ is a set of
vertices no two of which are connected by an edge. The set of
independent sets is $\text{IS}:=\{I\subseteq V:\forall e\in E:
e\not\subseteq I\}$. It is known that counting independent sets,
i.e.\ computing $|\text{IS}|$ is \#P-hard \citep{Vadhan01}.

We now reduce IS to UPOW:
Define $\cS_t:=V\setminus e_t$ for $t\in\{1,...,n\}$ and consider any $W\subseteq V$ and its complement
$\overline{W}=V\setminus W$. Then
\begin{eqnarray*}
  & & W\not\in\text{IS}
  ~\iff~ \exists e\in E:e\subseteq W
  ~\iff~ \exists t: e_t\subseteq W
  ~\iff~ \exists t: \overline{W}\subseteq\cS_t
\\
  & & ~\iff~ \exists t: \overline{W}\in\cP(\cS_t)
  ~\iff~ \overline{W}\in\cP(\cS_1)\cup...\cup\cP(\cS_n)
\end{eqnarray*}
Since set-complement is a bijection and there are $2^{|V|}$ possible $W$, this implies
\begin{equation*}
  |\text{IS}| ~+~ |\cP(\cS_1)\cup...\cup\cP(\cS_n)| ~=~ 2^{|V|}
\end{equation*}
Hence an efficient algorithm for computing $|\cP(\cS_1)\cup...\cup\cP(\cS_n)|$
would imply the existence of an efficient algorithm for computing $|\text{IS}|$.
\end{proof}

\begin{proof}{\bf of Theorem~\ref{thm:xidhard}.}
Lemma~\ref{lem:xiupow} and Theorem~\ref{thm:upowhard} show that $f$ is \#P-hard.
What remains to be shown is that $f$ is in $\#P$.
First consider UPOW function $u:\cP_d^n\to\{0,...,2^d\}$ defined as $u(\cS_1,...,\cS_d):=A$.
With identification $\{0,1\}^d\cong\cP_d$ via
$a_t^i=\llbracket i\in\cS_t\rrbracket$ and $\cS_t=\{i:a_t^i=1\}$,
Lemma~\ref{lem:xiupow} shows
that $f(a_{1:n})+u(\cS_1,...,\cS_n)=2^d$.
Since $\cS\in\cP(\cS_1)\cup...\cup\cP(\cS_n)$ iff $\exists t:\cS\in\cP(\cS_t)$ iff $\exists t:\cS\subseteq\cS_t$,
the non-deterministic polynomial time algorithm ``Guess $\cS\in\cP_d$ and accept iff $\exists t:\cS\subseteq\cS_t$''
has exactly $A$ accepting paths, hence $u$ is in $\#P$.
Since this algorithm has $2^d$ paths in total,
swapping accepting and non-accepting paths shows that also $f$ is in $\#P$.
\end{proof}

One interesting feature of our reduction was that we only required a sequence of negative examples.
As we shall see in Section \ref{sec:monotone}, exact Bayesian inference is tractable if only positive examples are provided.
Finally, one can also show that the Bayesian predictor $\xi_d$ obtains the optimal loss.

\begin{proposition}
There exists a sequence of side information $a_{1:2^d} \in \cB^{2^d \times d}$ such that for any probabilistic predictor $\rho_t : \cB^d \to (\cB \to [0,1])$, there exists an $\cS \in \cP_d$ such that $h_{\cS}$ would generate a sequence of targets that would give $\cL_{2^d}(\rho) \geq d$.
\end{proposition}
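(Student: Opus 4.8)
The plan is to prove this minimax lower bound by exhibiting a single hard side-information sequence and then averaging over the $2^d$ hypotheses. Concretely, I would take $a_{1:2^d}$ to be an enumeration of all $2^d$ elements of $\cB^d$, so that every Boolean vector appears exactly once as some $a_t$. The crucial consequence of this choice is that the $2^d$ generated label sequences $x^\cS := h_\cS(a_{1:2^d})$, one for each $\cS \in \cP_d$, are pairwise distinct. Once distinctness is in hand, no predictor can assign large probability to all of these sequences simultaneously, and a short pigeonhole/averaging step will force at least one of them to receive probability at most $2^{-d}$, i.e.\ loss at least $d$.

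First I would establish the distinctness claim, which is purely combinatorial: for $\cS \neq \cS'$ the monotone conjunctions $h_\cS$ and $h_{\cS'}$ disagree on some input in $\cB^d$. Indeed, picking without loss of generality some $i \in \cS \setminus \cS'$ and letting $v \in \cB^d$ be the vector that is $0$ in coordinate $i$ and $1$ in every other coordinate, we get $h_\cS(v) = 0$ (because $i \in \cS$ and $v^i = 0$) while $h_{\cS'}(v) = 1$ (because every coordinate indexed by $\cS'$ equals $1$). Since $v$ occurs somewhere in the enumeration $a_{1:2^d}$, the sequences $x^\cS$ and $x^{\cS'}$ differ in that position. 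Hence $\cS \mapsto x^\cS$ is injective, so the $2^d$ generated target sequences are all distinct elements of $\cB^{2^d}$.

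Next I would invoke the fact that, for the fixed side-information sequence $a_{1:2^d}$, the map $x_{1:2^d} \mapsto \rho(x_{1:2^d}; a_{1:2^d}) = \prod_{t} \rho_t(x_t; a_t)$ is a genuine probability mass function on $\cB^{2^d}$, since this is exactly the chain-rule consistency built into the definition of the sequential predictor. Because the $x^\cS$ are distinct outcomes, summing their probabilities gives $\sum_{\cS \in \cP_d} \rho(x^\cS; a_{1:2^d}) \leq \sum_{y \in \cB^{2^d}} \rho(y; a_{1:2^d}) = 1$. Averaging over the $2^d$ hypotheses then yields $\min_{\cS} \rho(x^\cS; a_{1:2^d}) \leq 2^{-d}$; letting $\cS^*$ be a minimizer, the hypothesis $h_{\cS^*}$ generates a target sequence with $\cL_{2^d}(\rho) = -\log \rho(x^{\cS^*}; a_{1:2^d}) \geq d$, as required.

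The only delicate point is this normalization step: the argument hinges on treating $\rho(\cdot\,; a_{1:2^d})$ as a (sub-)distribution that sums to at most $1$ over label sequences, which is legitimate precisely because the side information is fixed in advance rather than chosen adaptively in response to the revealed labels, and because each $\rho_t(\cdot\,; a_t)$ is a proper conditional distribution over $\cB$. Granting that, the remainder is just the combinatorial distinctness lemma plus a one-line averaging inequality, so I expect the main (and rather mild) obstacle to be phrasing the injectivity construction cleanly rather than any genuine analytic difficulty.
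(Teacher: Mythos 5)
Your proof is correct and follows essentially the same route as the paper: take the side information $a_{1:2^d}$ to enumerate all of $\cB^d$, note that the $2^d$ hypotheses then generate $2^d$ pairwise distinct target sequences, and pigeonhole the unit probability mass of $\rho(\,\cdot\,;a_{1:2^d})$ to force some generated sequence to have probability at most $2^{-d}$, hence loss at least $d$. The only difference is one of rigor, in your favor: you explicitly prove the injectivity of $\cS \mapsto h_\cS(a_{1:2^d})$ (via the vector that is $0$ on a coordinate in $\cS \setminus \cS'$ and $1$ elsewhere) and spell out the normalization and averaging step, both of which the paper's proof asserts without detail.
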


\begin{proof}
Consider the sequence of side information $a_{1:2^d} \in \cB^{2^d \times d}$, where $a^i_d$ is defined to be the $i$th digit of the binary representation of $t$, for all $1 \leq t \leq 2^d$.
As
\begin{equation}
\left| \{ x_{1:2^d} \, : \, x_{1:2^d} \text{~is generated by an~} \cS \in \cP_d \} \right| = 2^d,
\end{equation}
to have $\cL_{2^d}(\rho) < \infty$ for all $x_{1:2^d}$, we need $\rho(x_{1:2^d}) > 0$ for each of the $2^d$ possible target strings, which implies that $\cL_{2^d}(\rho) \geq d$.
\end{proof}

\paragraph{Memorization.}
As a further motivating example, it is instructive to compare the exact Bayesian predictor to that of a naive method for learning monotone conjunctions that simply memorizes the training instances, without exploiting the logical structure within the class.
To this end, consider the sequential predictor that assigns a probability of
\begin{equation*}
m_d(x_n|x_{<n};a_{1:n}) = \left\{
     \begin{array}{lr}
       \llbracket x_n = l(a_{1:n}, x_{<n}) \rrbracket \text{~~if~~}  a_n\in\{ a_1, \dots, a_{n-1} \}; \\
       \tfrac{1}{2} \text{~~~~~~~~~~~~~~~~~~~~~~~~~~~~~~~~otherwise}
     \end{array}
   \right.
\end{equation*}
to each target, where $l(a_{1:n}, x_{<n})$ returns the value of $x_t$ for some $1 \leq t \leq n-1$ such that $a_n = a_t$.
Provided the data is generated by some $h_\cS$ with $\cS\in\cP_d$, the loss of the above memorization technique is easily seen to be at most $2^d$.
This follows since an excess loss of 1 bit is suffered whenever a new $a_k$ is seen, and there are at most $2^d$ distinct inputs (of course no loss is suffered whenever a previously seen $a_k$ is repeated).
While both memorization and the Bayes predictor suffer a constant loss that is independent of the number of training instances, the loss of the memorization technique is exponentially larger as a function of $d$.
This is offset by the fact that memorization can be implemented in essentially $O(nd)$ time by storing the previously seen examples in a hash table.
Later we show that by using a combination of Bayesian inference and a slightly more sophisticated form of memorization, it is possible to construct a method whose running time and loss are both bounded by polynomial functions of $n$ and $d$.

\section{Bayesian learning of monotone conjunctions from positive examples}\label{sec:monotone}

We now show how exact Bayesian inference over the class of monotone
conjunctions can be performed efficiently, provided learning only
occurs from positive examples $x_{1:n}\equiv 1_{1:n}$.
Using the generalized distributive law \citep{ali00} we derive an
alternative form of Equation~\ref{eq:monotone_mixture} that can be
straightforwardly computed in time $O(nd)$.

\begin{proposition}\label{prop:trick}
For all $n,d\in\mathbb{N}$, for all $a_{1:n}\in \cB^{n \times d}$, then
\begin{equation*}
\xi_d(1_{1:n};a_{1:n}) ~=~ \prod_{i=1}^d \left( \frac{1}{2} + \frac{1}{2} \left\llbracket\bigwedge_{t=1}^n a^i_t \right\rrbracket \right).
\end{equation*}
\end{proposition}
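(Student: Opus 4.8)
The plan is to work directly from the definition in Equation~\ref{eq:monotone_mixture} specialized to the all-ones target $x_{1:n} = 1_{1:n}$, and to turn the sum over the $2^d$ subsets into a product over the $d$ coordinates by means of the generalized distributive law. So we only ever deal with the deterministic likelihoods $\nu_\cS$ evaluated at $1_{1:n}$.

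First I would rewrite this deterministic likelihood in factored form. Since $h_\cS(a_t) = \bigwedge_{i\in\cS} a_t^i = \prod_{i\in\cS} a_t^i$ and all quantities lie in $\{0,1\}$, the event $h_\cS(a_{1:n}) = 1_{1:n}$ holds iff $a_t^i = 1$ for every $t$ and every $i\in\cS$. Writing the joint indicator as a product over $t$ and interchanging the order of the two products gives the key factorization
\[
  \nu_\cS(1_{1:n};a_{1:n}) ~=~ \prod_{t=1}^n\prod_{i\in\cS} a_t^i ~=~ \prod_{i\in\cS}\prod_{t=1}^n a_t^i ~=~ \prod_{i\in\cS}\left\llbracket\bigwedge_{t=1}^n a_t^i\right\rrbracket,
\]
where the last step uses that a product of Boolean values equals their conjunction.

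Next, writing $c_i := \llbracket\bigwedge_{t=1}^n a_t^i\rrbracket$, the mixture becomes $\xi_d(1_{1:n};a_{1:n}) = 2^{-d}\sum_{\cS\in\cP_d}\prod_{i\in\cS} c_i$. Now I apply the distributive law: expanding $\prod_{i=1}^d (1+c_i)$ and selecting from each factor either the summand $1$ or the summand $c_i$ enumerates exactly the subsets $\cS\subseteq\{1,\dots,d\}$, so that $\sum_{\cS\in\cP_d}\prod_{i\in\cS} c_i = \prod_{i=1}^d (1+c_i)$. Distributing the $2^{-d}$ across the $d$ factors then yields $\prod_{i=1}^d \tfrac{1}{2}(1+c_i) = \prod_{i=1}^d\left(\tfrac{1}{2}+\tfrac{1}{2}c_i\right)$, which is precisely the claimed right-hand side.

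The argument is essentially a reindexing followed by one application of the distributive law, so I do not expect a serious obstacle; the only step requiring care is the coordinatewise factorization of $\nu_\cS$, where one must justify interchanging the two products and the reduction of a product of Booleans to a conjunction. As a sanity check one can also verify both sides directly: letting $B := \{ i : \bigwedge_{t=1}^n a_t^i = 1 \}$ be the set of coordinates that are true in every example, the consistent hypotheses are exactly the subsets of $B$, giving $2^{|B|}$ of them and hence $\xi_d(1_{1:n};a_{1:n}) = 2^{|B|-d}$, which matches the product, since each factor equals $1$ for $i\in B$ and $\tfrac{1}{2}$ otherwise.
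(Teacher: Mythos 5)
Your proof is correct and takes essentially the same route as the paper's: your coordinatewise factorization of $\nu_\cS(1_{1:n};a_{1:n})$ obtained by interchanging the products over $t$ and $i\in\cS$ is exactly the paper's Equation~\ref{eq:trick}, and your distributive-law step $\sum_{\cS\in\cP_d}\prod_{i\in\cS}c_i=\prod_{i=1}^d(1+c_i)$ is the paper's expansion of $\prod_{i=1}^d\bigl(\tfrac{1}{2}+\tfrac{1}{2}\llbracket\bigwedge_{t=1}^n a_t^i\rrbracket\bigr)$ read in the opposite direction (the paper expands the product into the sum over subsets; you collapse the sum into the product). Your closing sanity check counting the $2^{|B|}$ consistent hypotheses is a nice independent verification but not a different argument.
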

\begin{proof}
Consider what happens when the expression
$\prod_{i=1}^d \left( \frac{1}{2} + \frac{1}{2} \llbracket\bigwedge_{t=1}^n a^i_t\rrbracket  \right)$
is expanded.
We get a sum containing $2^d$ terms, that can be rewritten as
\begin{equation*}
\sum_{\cS\in\cP_d } \frac{1}{2^d} \left\llbracket \bigwedge_{i\in\cS} \bigwedge_{t=1}^n a^i_t \right\rrbracket
~=~ \sum_{\cS\in\cP_d } \frac{1}{2^d}  \left\llbracket h_\cS(a_{1:n}) = 1_{1:n} \right\rrbracket
~=~ \xi_d(1_{1:n};a_{1:n}).
\end{equation*}
where the second equality follows from Equation~\ref{eq:monotone_mixture} and the first one from
\begin{equation}\label{eq:trick}
\nu_\cS(1_{1:n}|a_{1:n})
~=~ \left\llbracket h_\cS(a_{1:n}) = 1_{1:n} \right\rrbracket
~=~ \bigwedge_{t=1}^n h_\cS(a_t)
~=~ \bigwedge_{t=1}^n \bigwedge_{i\in\cS} a_t^{i}
~=~ \bigwedge_{i\in\cS} \bigwedge_{t=1}^n  a_t^{i}
\end{equation}
\end{proof}

\paragraph{On MAP model selection from positive examples.}
If we further parametrize the right hand side of Proposition~\ref{prop:trick} by introducing a hyper-parameter $\a\in(0,1)$ to give
\begin{equation}\label{eq:alpha_factored_xi}
\xi^\a_d(1_{1:n};a_{1:n}) ~:=~ \prod_{i=1}^d \left( (1-\a) + \a \left\llbracket \bigwedge_{t=1}^n a^i_t \right\rrbracket \right),
\end{equation}
we get a family of tractable Bayesian algorithms for learning monotone conjunctions from positive examples.
The $\a$ parameter controls the bias toward smaller or larger formulas; smaller formulas are favored if $\a < \tfrac{1}{2}$, while larger formulas are favored if $\a > \tfrac{1}{2}$, with the expected formula length being $\a d$.
If we denote the prior over $\cS$ by $w_\a(\cS) := \a^{|\cS|}(1-\a)^{d-|\cS|}$, we get the mixture
\begin{equation*}
  \xi^\a_d(x_{1:n};a_{1:n}) ~=~ \sum_{\cS\in\cP_d} w_\a(\cS) \nu_\cS(x_{1:n};a_{1:n})
\end{equation*}
From this we can directly read off the maximum a posteriori (MAP) model
\begin{equation*}
\cS_n' ~:=~ \arg\max_{\cS\in\cP_d} w_\a(\cS|x_{1:n};a_{1:n})
~=~ \arg\max_{\cS\in\cP_d} w_\a(\cS)\nu_\cS(x_{1:n}|a_{1:n})
\end{equation*}
under various choices of $\a$. The second equality follows from Bayes rule.
For positive examples (i.e. $x_{1:n}=1_{1:n})$, Equation~\ref{eq:trick} allows us to rewrite this as
\begin{equation*}
\cS_n' ~=~ \arg\max_{\cS\in\cP_d} w_\a(\cS) \left\llbracket \bigwedge_{i\in\cS} \bigwedge_{t=1}^n a^i_t \right\rrbracket
\end{equation*}
For $\a > \tfrac{1}{2}$, the MAP model $\cS_n'$ at time $n$ is unique, and is given by
\begin{equation*}
\cS_n' ~=~ \left \{ i\in\{1, \dots, d \} : \bigwedge_{t=1}^n a^i_t \right \}.
\end{equation*}
For $\a=\tfrac{1}{2}$, a MAP model is any subset of $\cS_n'$.
For $\a<\tfrac{1}{2}$, the MAP model is $\{ \}$.
Finally, we remark that the above results allow for a Bayesian interpretation of Valiant's algorithm for PAC learning monotone conjunctions.
His method, described in Section 5 of \citep{Valiant:1984:TL:1968.1972}, after seeing $n$ positive examples, outputs the concept $\bigwedge_{i\in\cS'_n} x_i$; in other words, his method can be interpreted as doing MAP model selection using a prior belonging to the above family when $\a > \tfrac{1}{2}$.

\paragraph{A Heuristic Predictor.}
Next we discuss a heuristic prediction method that incorporates Proposition~\ref{prop:trick} to efficiently perform Bayesian learning on only the positive examples.
Consider the probabilistic predictor $\xi_d^+$ defined by
\begin{eqnarray}\label{eq:heuristic_predictor}
\xi_d^+(x_n|x_{<n};a_{1:n}) &:=& \frac{\xi_d(x^+_{<n}x_n;a^+_{<n} a_n)}{\xi_d(x^+_{<n};a^+_{<n})},
\end{eqnarray}
where we denote by $a^+_{<n}$ the subsequence of $a_{<n}$ formed by deleting the $a_k$ where $x_k = 0$, for $1 \leq k \leq n-1$.
Similarly, $x^+_{<n}$ denotes to the subsequence formed from $x_{<n}$ by deleting the $x_k$ where $x_k = 0$.
Note that since $\xi_d(x^+_{<n}0;a^+_{<n} a_n) = \xi_d(x^+_{<n};a^+_{<n}) \left(1 - \xi_d(1|x^+_{<n};a^+_{<n} a_n) \right)$, Equation \ref{eq:alpha_factored_xi} can be used to efficiently compute Equation~\ref{eq:heuristic_predictor}.
To further save computation, the values of the $\bigwedge_{t=1}^n a^i_t$ terms can be incrementally maintained using $O(d)$ space.
Using these techniques, each prediction can be made in $O(d)$ time.

Of course the main limitation with this approach is that it ignores all of the information contained within the negative instances.
It is easy to see that this has disastrous implications for the loss.
For example, consider what happens if a sequence of $n$ identical negative instances are supplied.
Since no learning will ever occur, a positive constant loss will be suffered at every timestep, leading to a loss that grows linearly in $n$.
This suggests that some form of memorization of negative examples is necessary; we will explore this further in the next section.

\paragraph{Discussion.}
There are some other noteworthy examples of where it is possible to efficiently perform exact Bayesian inference over large discrete spaces.
The Context Tree Weighting algorithm \citep{ctw95} performs exact model averaging over the space of bounded depth Prediction Suffix Trees \citep{Begleiter04onprediction}.
The switch distribution \citep{erven2007} performs exact model averaging over sequences of model indices, where each model index sequence describes a model formed by composing a particular sequence of indexed base models.
The Context Tree Switching algorithm \citep{DBLP:conf/dcc/VenessNHB12} combines these ideas to perform model averaging over all possible sequences of bounded depth Prediction Suffix Trees.
\cite{KoolenAW12} showed how to efficiently use model averaging to extend the ideas behind the switch distribution to robustly handle the case where only a small subset of models from a large pool are expected to make good probabilistic predictions.
Partition Tree Weighting \citep{veness13}, Live and Die Coding \citep{willemsPSMS97} and related methods for piecewise stationary sources \citep{Willems96,gyorgy2011efficient,Shamir99lowcomplexity} also work by efficiently performing model averaging over large discrete spaces defined by transition diagrams \citep{Willems96}.
The common theme amongst these techniques is the careful design of priors that allow for the application of either the generalized distributive law \citep{ali00} and/or dynamic programming to avoid the combinatorial explosion caused by naively averaging over a large number of models.

\section{An efficient, low loss algorithm for learning monotone \nobreak conjunctions}\label{sec:learning_from_both}

In this section we extend the ideas from the previous sections to construct an efficient online algorithm whose loss is bounded by $O(d^2)$.
The main idea is to extend the heuristic predictor so that it simultaneously memorizes negative instances while also favoring predictions of 0 in cases where the Bayesian learning component of the model is unsure.
The intuition is that by adding memory, there can be at most $2^d$ times where a positive loss is suffered.
Moving the $\a$ parameter closer towards $1$ causes the Bayesian component to more heavily weigh the predictions of the longer Boolean expressions consistent with the data, which has the effect of biasing the predictions more towards $0$ when the model is unsure.
Although this causes the loss suffered on positive instances to increase, we can show that this effect is relatively minor.
Our main contribution is to show that by setting $\a=2^{-d / 2^d}$, the loss suffered on both positive and negative instances is balanced in the sense that the loss can now be upper bounded by $O(d^2)$.
We proceed by first describing the algorithm, before moving on to analyze its loss.

\paragraph{Algorithm.}
The algorithm works very similarly to the previously defined heuristic predictor, with the following two modifications:
firstly, the set of all negative instances is incrementally maintained within a set $\cA$, with 0 being predicted deterministically if the current negative instance has been seen before; secondly, the $\xi_d$ terms in Equation~\ref{eq:heuristic_predictor} are replaced with $\xi^\a_d$, with $\a=2^{-d/2^d}$.
More formally,
\begin{eqnarray}\label{eq:bayes0_defn}
\zeta_d(x_t|x_{<t};a_{1:t}) &:=&
\left\{
\begin{array}{cl}
  1-x_t & \text{if}~a_t\in\cA \\
  \frac{\xi^\a_d(x^+_{<t}x_t;a^+_{<t}a_t)}{\xi^\a_d(x^+_{<t};a^+_{<t})} & \text{otherwise}
\end{array}
\right.
\end{eqnarray}

\algsetup{indent=2em}
\begin{algorithm}[t!]
\caption{{$\zeta_d(x_{1:n};a_{1:n} )$}}\label{alg:monotone}
\begin{algorithmic}[1]
\medskip
\STATE $w_i \leftarrow 1 \text{~for~} 1 \leq i \leq d$
\STATE $\cA \leftarrow \{ \}$
\STATE $\a \leftarrow 2^{-d/2^d}$
\STATE $r \leftarrow 1$
\medskip
\FOR{$t=1$ to $n$}
		
		\medskip
		\STATE \text{Observe $a_t$}
		\medskip
		\IF{$a_t\in\cA$}
		    \STATE $p_t(1;a_t) \leftarrow 0$ \label{alg1anotinA}
		    \STATE $p_t(0;a_t) \leftarrow 1$
		\ELSE
		    \STATE $p_t(1;a_t) \leftarrow \prod_{i=1}^{d} \frac{(1-\a) + \a w_i a^i_t}{(1-\a) + \a w_i}$ \label{alg1ainA}
		    \STATE $p_t(0;a_t) \leftarrow 1 - p_t(1;a_t)$
		\ENDIF
		\medskip
		\STATE \text{Observe $x_t$ and suffer a loss of~} $-\log p_t(x_t;a_t).$
    \medskip
    \IF{$x_t=1$}
        \FOR{$i=1$ to $d$}
            \STATE $w_{i} \leftarrow w_i~a_t^{i}$
        \ENDFOR
    \ELSE
		    \STATE $\cA \leftarrow \cA \cup \{ a_t \}$
		\ENDIF
    \medskip
		\STATE $r \leftarrow p_t(x_t;a_t) r$
\ENDFOR
\medskip
\RETURN $r$
\end{algorithmic}
\end{algorithm}

Complete pseudocode is given in Algorithm~\ref{alg:monotone}.
The algorithm begins by initializing the weights and the set of negative instances $\cA$.
Next, at each time step $t$, a distribution $p_t(\cdot;a_t) $ over $\{0,1\}$ is computed.
If $a_t$ has previously been seen as a negative example, the algorithm predicts 0 deterministically.
Otherwise it makes its prediction using the previously defined Bayesian predictor (with $\a=2^{-d/2^d}$) that is trained from only positive examples.
The justification for Line~\ref{alg1ainA} is as follows: First note that
$w_i$ is always equal to the conjunction of the $i$th component of the inputs corresponding to the positive examples occurring before time $t$,
formally
\begin{equation*}
  w_i ~=~ \bigwedge_{\hspace{-2ex}\tau=1:a_\tau\not\in\cA\hspace{-2ex}}^{t-1} a_\tau
  ~~~~\text{which by Equation~\ref{eq:alpha_factored_xi} implies}~~~~ \xi^\a_d(x^+_{<t};a^+_{<t}) ~=~ \prod_{i=1}^{d} [(1-\a) + \a w_i]
\end{equation*}
Similarly $\xi^\a_d(x^+_{<t}1;a^+_{<t}a_t) = \prod_{i=1}^{d} [(1-\a) + \a w_i a_t^i]$,
which by Equation~\ref{eq:bayes0_defn} for $a_t\notin\cA$ implies
\begin{equation*}
  \zeta_d(x_t=1|x_{<t};a_{1:t}) ~=~ {\prod_{i=1}^{d} [(1-\a) + \a w_i a_t^i]\over \prod_{i=1}^{d} [(1-\a) + \a w_i]} ~=~ p_t(1;a_t)
\end{equation*}
Trivially $p_t(x_t;a_t)=1-x_t=\zeta_d(x_t|x_{<t};a_{1:t})$ for $a_t\in\cA$ from Line~\ref{alg1anotinA}.
After the label is revealed and a loss is suffered, the algorithm either updates $\cA$ to remember the negative instance or updates its weights $w_i$, with the cycle continuing.
The algorithm returns $r=\prod_{t=1}^n p_t(x_1;a_t)=\zeta_d(x_{1:n};a_{1:n})$ as claimed.
It requires $O(nd)$ space and processes each example in $O(d)$ time.

\paragraph{Analysis.}
We now analyze the cumulative log-loss when using $\zeta_d$ in place of an arbitrary monotone conjunction corresponding to some $\cS^*\in\cP_d$.
We begin by first proving a lemma, before using it to upper bound the loss of Algorithm~\ref{alg:monotone}.
\begin{lemma}\label{lem:magic}
For all $d\in\mathbb{N} \setminus \{ 1 \} $, we have that
$
-\log \left( 1 - 2^{-d/2^d} \right) \leq d.
$
\end{lemma}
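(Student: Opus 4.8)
The plan is to convert the logarithmic inequality into an elementary statement about powers of two and then verify it by a convexity argument. Since all logarithms are base two, $-\log\left(1 - 2^{-d/2^d}\right) \le d$ is equivalent to $1 - 2^{-d/2^d} \ge 2^{-d}$, i.e.\ to $2^{-d/2^d} \le 1 - 2^{-d}$. My route to the latter passes through a clean auxiliary bound and a simple slack estimate.

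First I would establish the auxiliary inequality $2^{-y} \le 1 - y/2$ for all $y \in [0, \tfrac12]$. This follows from convexity of the map $y \mapsto 2^{-y}$: on $[0,\tfrac12]$ the convex function lies below the chord joining $(0,1)$ and $(\tfrac12, 2^{-1/2})$, and that chord in turn lies below the line $1 - y/2$ because its slope $-2\left(1 - 2^{-1/2}\right) \approx -0.586$ is steeper than $-\tfrac12$. Next I would check that the exponent $y := d/2^d$ actually lands in the admissible range: the quantity $d\,2^{-d}$ attains its maximum $\tfrac12$ at $d \in \{1,2\}$ and is strictly decreasing for $d \ge 2$, so $y \le \tfrac12$ precisely under the hypothesis $d \in \mathbb{N} \setminus \{1\}$. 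Substituting gives $2^{-d/2^d} \le 1 - \tfrac12 \cdot \tfrac{d}{2^d} = 1 - \tfrac{d}{2^{d+1}}$.

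Finally I would close the gap between $1 - d/2^{d+1}$ and the target $1 - 2^{-d}$. Writing $2^{-d} = 2/2^{d+1}$, for $d \ge 2$ we have $d/2^{d+1} \ge 2/2^{d+1} = 2^{-d}$, hence $1 - d/2^{d+1} \le 1 - 2^{-d}$. Chaining the two bounds yields $2^{-d/2^d} \le 1 - 2^{-d}$, which rearranges to $1 - 2^{-d/2^d} \ge 2^{-d}$ and therefore to $-\log\left(1 - 2^{-d/2^d}\right) \le d$, as required.

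The hard part will be choosing the auxiliary linear upper bound $1 - y/2$ correctly: it must be weak enough to dominate $2^{-y}$ across all of $[0,\tfrac12]$ (which is why I use the chord/convexity argument rather than a local tangent-line bound), yet strong enough that the resulting slack $d/2^{d+1}$ still exceeds $2^{-d}$ for every admissible $d$. The exclusion of $d = 1$ is essential exactly here: at $d = 1$ one has $y = \tfrac12$ and $d/2^{d+1} = 2^{-d}$ with no margin, whereas for $d \ge 2$ the factor $d \ge 2$ supplies precisely the room needed.
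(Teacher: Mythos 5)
Your proof is correct, and it takes a genuinely different route from the paper's. You exponentiate the claim into the equivalent base-two statement $2^{-d/2^d} \le 1 - 2^{-d}$ and close it in two elementary steps: the chord bound $2^{-y} \le 1 - y/2$ on $[0,\tfrac{1}{2}]$ (valid by convexity of $y \mapsto 2^{-y}$, since the chord through $(0,1)$ and $(\tfrac{1}{2},2^{-1/2})$ has slope $-2(1-2^{-1/2}) < -\tfrac{1}{2}$), applied with $y = d/2^d \le \tfrac{1}{2}$, followed by the slack estimate $d/2^{d+1} \ge 2/2^{d+1} = 2^{-d}$ for $d \ge 2$. The paper instead works in natural logarithm: it applies $\e^{-x} \le \frac{1}{1+x}$ with $x = d/\e^d$, rewrites $-\ln(1-\e^{-d/\e^d}) \le d + \ln\left(\tfrac{1}{d} + \tfrac{1}{\e^d}\right)$, bounds the residual by $0$ for $d \ge 2$, and then transfers to base two via the substitution $d \leadsto d\ln 2$. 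Your argument is more self-contained and avoids the base-change trick entirely; the paper's manipulation buys something your chord bound does not, namely the near-matching lower bound $-\ln(1-\e^{-d/\e^d}) \ge d - \ln d$, showing the estimate is tight up to a logarithmic term. One factual slip in your closing commentary (it does not affect the proof): the boundary case of your slack estimate is $d=2$, where $d/2^{d+1} = \tfrac{1}{4} = 2^{-d}$ holds with equality and the chain still closes because the chord bound leaves room ($2^{-1/2} \le \tfrac{3}{4}$); at $d=1$ one has $y = \tfrac{1}{2}$ as well (so $y \le \tfrac{1}{2}$ does not isolate $d \ne 1$, contrary to your ``precisely''), and $d/2^{d+1} = \tfrac{1}{4} < \tfrac{1}{2} = 2^{-d}$, so the slack estimate fails outright there rather than holding ``with no margin'' --- consistent, of course, with the lemma itself being false at $d=1$.
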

\begin{proof}
We have that
\begin{equation}\label{pr:magic}
 -\ln(1-\e^{-d/\e^d})
 ~\leq~ -\ln \left( 1-{1\over 1+d/\e^d} \right)
 ~=~ \ln{1+d/\e^d\over d/\e^d}
 ~=~ d + \ln \left( {1\over d}+{1\over\e^d} \right)
 ~\leq~ d.
\end{equation}
The first bound follows from $\e^{-x}\leq{1\over 1+x}$. The equalities are simple algebra.
The last bound follows from ${1\over d}+{1\over\e^d}\leq 1$ for $d\geq 2$.
(A similar lower bound
$-\ln(1-\e^{-d/\e^d})\geq-\ln(1-(1-d/\e^d))=d-\ln d$ shows that the bound is rather tight for large $d$).
Substituting $d\leadsto d\ln 2$ in (\ref{pr:magic}) and dividing by $\ln 2$ proves the lemma.
\end{proof}

\vspace{1em}

\begin{theorem}\label{thm:main_result}
If $x_{1:n}$ is generated by a hypothesis $h_{\cS^*}$ such that $\cS^*\in\cP_d$ then for all $n\in\mathbb{N}$, for all $d\in\mathbb{N}\setminus\{1\}$, for all $x_{1:n}\in\cB^n$, for all $a_{1:n}\in \cB^{n \times d}$, we have that $\cL_n(\zeta_d) \leq 2d^2$.
\end{theorem}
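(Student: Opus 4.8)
The plan is to split the cumulative loss $\cL_n(\zeta_d) = \sum_{t=1}^n -\log p_t(x_t;a_t)$ into the contribution from positive examples and the contribution from negative examples, bound each separately, and check that the choice $\a = 2^{-d/2^d}$ balances the two at roughly $d^2$ and $d$ respectively. First I would observe that because the data is generated by the deterministic hypothesis $h_{\cS^*}$, the same input $a$ always carries the same label; hence any $a$ stored in $\cA$ (which only ever receives negative instances) can never reappear as a positive instance. Consequently every positive example falls into the second case of Equation~\ref{eq:bayes0_defn}, every negative example with $a_t \in \cA$ contributes zero loss, and only the positive examples together with the first occurrence of each negative input can contribute.

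For the positive part, I would exploit the telescoping structure of Equation~\ref{eq:heuristic_predictor}: writing the positive timesteps as $t_1 < \dots < t_m$, the product $\prod_j p_{t_j}(1;a_{t_j})$ collapses to $\xi^\a_d(1_{1:m}; a_{t_1}\cdots a_{t_m})$, so the positive loss equals $-\log \xi^\a_d(1_{1:m}; a^+)$. Using the factored form (Equation~\ref{eq:alpha_factored_xi}), every coordinate $i \in \cS^*$ contributes a factor $(1-\a)+\a = 1$, since every positive example sets $a_{t_j}^i = 1$, while each of the remaining at most $d$ coordinates contributes a factor of at least $1-\a$. This gives $\xi^\a_d(1_{1:m};a^+) \geq (1-\a)^d$, so the positive loss is at most $-d\log(1-\a)$, which by Lemma~\ref{lem:magic} is at most $d^2$.

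For the negative part, the key computation is that, for an input $a_t \notin \cA$, the predictive probability in Line~\ref{alg1ainA} simplifies to $p_t(1;a_t) = (1-\a)^{k_t}$, where $k_t$ is the number of coordinates with $w_i = 1$ and $a_t^i = 0$. I would then note that for a negative instance there is some $i^* \in \cS^*$ with $a_t^{i^*} = 0$, and that $w_{i^*} = 1$ always holds for $i^* \in \cS^*$ (every positive example keeps that coordinate's conjunction true); hence $k_t \geq 1$ and $p_t(1;a_t) \leq 1-\a$, so the per-example loss $-\log(1-p_t(1;a_t))$ is at most $-\log \a = d/2^d$. Since at most $2^d$ distinct inputs exist, at most $2^d$ negative examples incur nonzero loss, giving a total negative loss of at most $2^d \cdot d/2^d = d$.

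Adding the two bounds yields $\cL_n(\zeta_d) \leq d^2 + d \leq 2d^2$ for $d \geq 2$, as required. The main obstacle, and the real content of the argument, is the balancing act in the choice of $\a$: the positive loss grows like $-d\log(1-\a)$ as $\a \to 1$, while the negative loss grows like $2^d(-\log\a)$ as $\a \to \tfrac12$, and it is exactly the setting $\a = 2^{-d/2^d}$, together with the tight estimate of Lemma~\ref{lem:magic}, that simultaneously controls both at a polynomial level; verifying that both quantities indeed land at $O(d^2)$ under this single choice is where the care is needed.
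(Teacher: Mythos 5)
Your proof is correct, and it follows the paper's overall skeleton---the same split of $\cL_n(\zeta_d)$ into positive and negative contributions, the same telescoping of the positive part to $-\log \xi^\a_d(x^+_{1:n};a^+_{1:n}) \leq -d\log(1-\a) \leq d^2$ via Lemma~\ref{lem:magic}, and the same counting of at most $2^d$ first occurrences of negative inputs---but your treatment of the negative instances is genuinely sharper than the paper's. The paper lower-bounds the mixture $\xi^\a_d(x^+_{<t}0;a^+_{<t}a_t)$ by the single term corresponding to $\cS = \cD_t$ (Equations~\ref{eq:restricted_marginal} and~\ref{eq:restricted_marginal0}), giving $\zeta_d(x_t=0|x_{<t};a_{1:t}) \geq \a^{|\cD_t|}$, hence a per-instance loss of at most $-d\log\a$ and a negative total of $d^2$. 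You instead evaluate the conditional \emph{exactly} from the product form in Line~\ref{alg1ainA}: the per-coordinate ratio is $1$ when $w_i=0$ or $w_i a^i_t = 1$ and $1-\a$ when $w_i=1,\ a^i_t=0$, so $p_t(1;a_t)=(1-\a)^{k_t}$; your observation that $w_{i^*}=1$ persists for every $i^*\in\cS^*$ (positive examples always set those coordinates to $1$) while $h_{\cS^*}(a_t)=0$ forces some such $a_t^{i^*}=0$ correctly gives $k_t \geq 1$, hence a per-instance loss of at most $-\log\bigl(1-(1-\a)\bigr) = -\log\a = d/2^d$ and a negative total of at most $d$. This yields the stronger bound $\cL_n(\zeta_d) \leq d^2 + d$, which of course implies the stated $2d^2$. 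Your supporting details also check out: positive examples can never lie in $\cA$ because the data is generated deterministically by $h_{\cS^*}$, so the telescoping argument applies to all positive timesteps, and repeated negatives cost nothing. One small caveat on your closing remark: under your sharper analysis the two terms are no longer balanced ($d^2$ versus $d$), so $\a=2^{-d/2^d}$ is not the delicate equilibrium your last paragraph suggests---any $\a$ with $-\log\a = O(d/2^d)$ works equally well in your argument---though the requirement that the negative part overcome the $2^d$ multiplicity does still force $\a$ exponentially close to $1$, so the qualitative tension you describe is real.
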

\begin{proof}
We begin by decomposing the loss into two terms, one for the positive and one for the negative instances.
\begin{eqnarray}
\cL_n(\zeta_d) &=& \sum_{t=1}^n -\log \zeta_d(x_t|x_{<t};a_{1:t}) \notag \\
&=& \hspace{-0.85em} \sum_{\substack{t\in[1,n] \\\text{s.t.~} x_t=1 }} \hspace{-0.85em} -\log \zeta_d(x_t = 1|x_{<t};a_{1:t}) \hspace{0.65em} +  \sum_{\substack{t\in[1,n] \\\text{s.t.~} x_t = 0}} \hspace{-0.85em} -\log \zeta_d(x_t = 0|x_{<t};a_{1:t}) \notag \\
&=& \hspace{-0.85em} \sum_{\substack{t\in[1,n] \\\text{s.t.~} x_t=1 }} \hspace{-0.85em} -\log \frac{\xi^\a_d(x^+_{1:t};a^+_{1:t})}{\xi^\a_d(x^+_{<t};a^+_{<t})} \hspace{0.65em} + \sum_{\substack{t\in[1,n] \\\text{s.t.~} x_t = 0}} \hspace{-0.85em} -\log \zeta_d(x_t = 0|x_{<t};a_{1:t}) \notag \\
&=& -\log  \xi^\a_d(x^+_{1:n};a^+_{1:n}) \hspace{0.65em} + \sum_{\substack{t\in[1,n] \\\text{s.t.~} x_t = 0}} \hspace{-0.85em} -\log \zeta_d(x_t = 0|x_{<t};a_{1:t}), \label{eq:bayes0_left_right_split}
\end{eqnarray}
where we have used the notation $[1,d] := \{ 1, 2, \dots, d \}$.
The final step follows since the left summand telescopes.
Next we will upper bound the left and right terms in Equation \ref{eq:bayes0_left_right_split} separately.

For $\a\in(0.5,1)$, we have for the left term that
\begin{eqnarray}
-\log  \xi^\a_d(x^+_{1:n};a^+_{1:n})
&\leq& -\log \left ( \a^{|\cS^*|} (1-\a)^{d-|\cS^*|} \right) \notag \\
&\leq& -d \log (1 -\a). \label{eq:bayes_alpha_bound}
\end{eqnarray}

Now, let $\cU := \left \{ t\in[1,n] : x_t = 0 ~\wedge~ \bigwedge_{i=1}^{t-1} (a_t \neq a_i) \right \}$ denote the set of time indices where a particular negative instance is seen for the first time and let
\begin{equation}\label{def:Dn}
\cD_t ~:=~ \left \{ i\in[1,d] : \bigwedge_{\tau=1}^{t-1} \left( \neg x_\tau \vee a_\tau^{i} \right) \right\}
\end{equation}
denote the indices of the variables not ruled out from the positive examples occurring before time $t$.
Given these definitions, we have that
\begin{eqnarray}\label{eq:restricted_marginal}
\xi^\a_d(x^+_{<t};a^+_{<t}) \notag
&=& \sum_{\cS\in\cP_d} \a^{|\cS|} (1-\a)^{d-|\cS|} \left\llbracket h_\cS(a^+_{<t}) = x^+_{<t} \right\rrbracket
~=~ \sum_{\cS\in\cP(\cD_t)} \a^{|\cS|} (1-\a)^{d-|\cS|}
\\
&=& (1-\a)^{d-|\cD_t|}\sum_{\cS\in\cP(\cD_t)} \a^{|\cD_t|} (1-\a)^{|\cD_t|-|\cS|}
~=~ (1-\a)^{d-|\cD_t|}
\end{eqnarray}
and similarly for $t\in\cU$
\begin{eqnarray}\label{eq:restricted_marginal0}
\xi^\a_d(x^+_{<t}0;a^+_{<t}a^+)
&=& \sum_{\cS\in\cP_d} \a^{|\cS|} (1-\a)^{d-|\cS|} \left\llbracket h_\cS(a^+_{<t}a_t) = x^+_{<t}0 \right\rrbracket
\\  \notag
&=& \sum_{\cS\in\cP(\cD_t)} \a^{|\cS|} (1-\a)^{d-|\cS|} \left\llbracket h_\cS(a_t) = 0 \right\rrbracket
~\geq~ \a^{|\cD_t|}(1-\a)^{d-|\cD_t|}
\end{eqnarray}
The last inequality follows by dropping all terms in the sum except
for the term corresponding the maximally sized conjunction
$\bigwedge_{t\in\cD_t} x_t $, which must evaluate to 0 given $a_t$,
since $\cS^* \subseteq \cD_t$ and $t\in\cU$.
Using the above, we can now upper bound the right term
\begin{eqnarray}
& & \hspace{-5em} \sum_{\substack{t\in[1,n] \\\text{s.t.~} x_t = 0}} \hspace{-0.85em}
-\log \zeta_d(x_t = 0|x_{<t};a_{1:t})
~\labelop{=}{a}~ \sum_{t\in\cU} -\log{\xi^\a_d(x^+_{<t}0;a^+_{<t}a^+) \over \xi^\a_d(x^+_{<t};a^+_{<t}) } \notag \\
&\labelop{\leq}{b}& \sum_{t\in\cU} -\log \a^{|\cD_t|}
~\labelop{\leq}{c}~ \sum_{t\in\cU} -d \log \a
~\labelop{\leq}{d}~ -d \, 2^d \log \a  \label{eq:memo_upper_bound}.
\end{eqnarray}
Step $(a)$ follows from the definition of $\zeta_d$ and $\cU$ (recall that a positive loss occurs only the first time an input vector is seen).
Step $(b)$ follows from Equations~\ref{eq:restricted_marginal} and \ref{eq:restricted_marginal0}.
Step $(c)$ follows since $|\cD_t| \leq d$ by definition.
Step $(d)$ follows since there are at most $2^d$ distinct Boolean vectors of side information.

Now, by picking $\a=2^{-d/2^d}$, we have from Equation~\ref{eq:bayes_alpha_bound} and Lemma~\ref{lem:magic} that $$-\log  \xi^\a_d(x^+_{1:n};a^+_{1:n}) \leq d^2.$$
Similarly, from Equation~\ref{eq:memo_upper_bound} we have that
\begin{equation*}
\sum_{\substack{t\in[1,n] \\\text{s.t.~} x_t = 0}} \hspace{-0.85em} -\log \zeta_d(x_t = 0|x_{<t};a_{1:t}) \leq -d \, 2^d \log 2^{-d/2^d}= d^2.
\end{equation*}
Thus by summing our previous two upper bounds, we have that
\begin{equation*}
\cL_n(\zeta_d) ~=~ -\log  \xi^\a_d(x^+_{1:n};a^+_{1:n}) \hspace{0.65em} + \sum_{\substack{t\in[1,n] \\\text{s.t.~} x_t = 0}} \hspace{-0.85em} -\log \zeta_d(x_t = 0|x_{<t};a_{1:t})  \leq 2 d^2.
\end{equation*}
\end{proof}

\paragraph{Discussion.}
So far we have always made the assumption that the targets are generated by some unknown monotone conjunction.
If this assumption does not hold, it is possible to observe a label which will have a probability of 0 under $\zeta_d$, which will cause an infinite loss to be suffered.
The next section will present an algorithm which, as well as being more space efficient, avoids this problem.

\section{A more practical approach}

Although the loss of Algorithm~\ref{alg:monotone} is no more than $2d^2$ (and independent of $n$), a significant practical drawback is its $O(nd)$ space complexity.
We now present an alternative algorithm which reduces the space complexity to $O(d)$, at the small price of increasing the worst case loss to no more than $O(d \log n)$.

\paragraph{Algorithm.}
The main intuition for our next algorithm follows from the loss analysis of Algorithm~\ref{alg:monotone}.
Our proof of Theorem~\ref{thm:main_result} led to a choice of $\a=2^{-d/2^d}$, which essentially causes each probabilistic prediction to be largely determined by the prediction made by the longest conjunction consistent with the already seen positive examples.
This observation led us to consider Algorithm~\ref{alg:minplus}, which uses a smoothed of this.
More formally,
\begin{equation*}
\pi_d(x_t|x_{<t};a_{1:t}) ~:=~ \frac{t}{t+1} \left\llbracket \bigwedge_{i\in\cD_t} a^i_t = x_t \right\rrbracket + \frac{1}{t+1} \left\llbracket \bigwedge_{i\in\cD_t} a^i_t \neq x_t \right\rrbracket;
\end{equation*}
where $\cD_t$ denotes the indices of the variables not ruled out from the positive examples occurring before time $n$ as defined in Equation~\ref{def:Dn}.

Complete pseudocode for implementing this procedure in $O(d)$ time per iteration, using $O(d)$ space, is given in Algorithm~\ref{alg:minplus}.
The set $\cD$ incrementally maintains the set $\cD_t$.
Compared to Algorithm~\ref{alg:monotone}, the key computational advantage of this approach is that it doesn't need to remember the negative instances.

\algsetup{indent=2em}
\begin{algorithm}[t!]
\caption{{$\pi_{d}(x_{1:n};a_{1:n} )$}}\label{alg:minplus}
\begin{algorithmic}[1]
\medskip
\STATE $\cD \leftarrow \{1, 2, \dots, d\}$
\STATE $r \leftarrow 1$
\medskip
\FOR{$t=1$ to $n$}
		
		\medskip
		\STATE \text{Observe $a_t$}
		\medskip
		
		\IF{$\prod_{i\in\cD} a^i_t = 1$}
		    \STATE $p_t(1;a_t) := t/(t+1)$
				\STATE $p_t(0;a_t) := 1/(t+1)$
		\ELSE
				\STATE $p_t(1;a_t) := 1/(t+1)$
				\STATE $p_t(0;a_t) := t/(t+1)$
		\ENDIF
		
		\medskip
		\STATE \text{Observe $x_t$ and suffer a loss of~} $-\log p_t(x_t;a_t).$
    \medskip
    \IF{$x_t=1$}
						\medskip
            \STATE $\cD \leftarrow \cD \setminus \left\{ i\in\{1,\dots,d\} \,:\, a_t^{i} = 0 \right \}$
		\ENDIF
    \medskip
		\STATE $r \leftarrow p_t(x_t;a_t) \, r$
\ENDFOR
\medskip
\RETURN $r$
\end{algorithmic}
\end{algorithm}

\paragraph{Analysis.}
We next upper bound the loss of Algorithm~\ref{alg:minplus}.

\begin{theorem}\label{thm:minplus_bound}
If $x_{1:n}$ is generated by a hypothesis $h_{\cS^*}$ such that $\cS^*\in\cP_d$, then for all $n\in\mathbb{N}$, for all $d\in\mathbb{N}$, for all $x_{1:n}\in\cB^n$, for all $a_{1:n}\in \cB^{n \times d}$, we have that $\cL_n(\pi_d) \leq (d+1) \log (n+1)$.
\end{theorem}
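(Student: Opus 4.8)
The plan is to exploit the fact that $\pi_d$ makes, at each step, a deterministic prediction $\hat{x}_t := \bigwedge_{i\in\cD_t} a_t^i$, paying a loss of $\log\frac{t+1}{t}$ when $\hat{x}_t = x_t$ and a loss of $\log(t+1)$ otherwise. I would call the latter event a \emph{mistake}, bound the total number of mistakes by $d$, and then sum the per-step losses using a telescoping argument. The whole proof reduces to a counting bound on mistakes together with one clean cancellation.

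First I would establish the invariant $\cS^* \subseteq \cD_t$ for all $t$, by induction. Initially $\cD_1 = \{1,\dots,d\} \supseteq \cS^*$, and whenever $x_\tau = 1$ the generating hypothesis forces $\bigwedge_{i\in\cS^*} a_\tau^i = 1$, so every $i\in\cS^*$ has $a_\tau^i = 1$ and therefore survives the update $\cD \leftarrow \cD\setminus\{i : a_\tau^i = 0\}$; the update only ever fires on positive examples, so $\cS^*$ is never removed. This invariant immediately pins down when a mistake can occur: if $\hat{x}_t = 1$ then $a_t^i = 1$ for every $i\in\cD_t\supseteq\cS^*$, which forces $x_t = \bigwedge_{i\in\cS^*} a_t^i = 1 = \hat{x}_t$. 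Hence a mistake requires $\hat{x}_t = 0$ and $x_t = 1$.

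Next I would observe that a mistake coincides \emph{exactly} with a strict shrinkage of $\cD$. A mistake means $x_t = 1$ together with $\hat{x}_t = 0$, i.e.\ there exists $i\in\cD_t$ with $a_t^i = 0$; but this is precisely the condition under which the positive-example update removes at least one index from $\cD$. Since $|\cD_t|$ starts at $d$, never drops below $|\cS^*|\geq 0$ by the invariant, and decreases by at least one at each mistake, the total number of mistakes is at most $d$.

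Finally I would assemble the bound. Writing $M$ for the set of mistake time steps, the loss splits as $\cL_n(\pi_d) = \sum_{t\notin M}\log\frac{t+1}{t} + \sum_{t\in M}\log(t+1)$. Completing the telescoping sum gives
\begin{equation*}
\cL_n(\pi_d) = \sum_{t=1}^n \log\frac{t+1}{t} + \sum_{t\in M}\left(\log(t+1) - \log\tfrac{t+1}{t}\right) = \log(n+1) + \sum_{t\in M}\log t.
\end{equation*}
Bounding each $\log t \leq \log(n+1)$ and using $|M|\leq d$ yields $\cL_n(\pi_d) \leq (d+1)\log(n+1)$. The only delicate point, which I would flag as the crux of the argument, is the identification of mistakes with shrinkages of $\cD$ under the invariant $\cS^*\subseteq\cD_t$; once that is in place, the remaining telescoping computation is routine.
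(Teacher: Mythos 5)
Your proof is correct and follows essentially the same route as the paper's: both arguments rest on the invariant $\cS^*\subseteq\cD_t$, bound the number of high-loss steps (your ``mistakes'', the paper's case (ii b)) by $d$ via the strict shrinkage of $\cD$, and telescope the remaining $\log\frac{t+1}{t}$ terms to $\log(n+1)$. Your version is if anything slightly more careful, making the induction for the invariant and the mistake--shrinkage equivalence explicit where the paper asserts them.
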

\begin{proof}
As $x_{1:n}$ is generated by some $h_{\cS^*}$ where $\cS^*\in\cP_d$, we have that
$\cL_n(\pi_d)=-\nobreak\log \pi_d( x_{1:n};a_{1:n}).$
We break the analysis of this term into two cases.
At any time $1 \leq t \leq n$, we have either:
Case (i):	$\bigwedge_{i\in\cA} a^i_t = \top$, which implies $h_\cS(a_t)=1$ for all $\cS \subseteq \cA = \cD_t$.
As the data is generated by some $h_{S^*}$, we must have $\cS^* \subseteq \cD_t$ and therefore $x_t=1$, so a loss of $-\log \tfrac{t}{t+1}$ is suffered.
Case (ii):
$\bigwedge_{i\in\cA} a^i_t = \bot$, where one of two situations occur: a) if $x_t=0$ we suffer a loss of $-\log \tfrac{t}{t+1}$; otherwise b) we suffer a loss of $-\log(1/(t+1))=\log (t+1)$ and at least one element in $\cA$ gets removed.
Notice that as the set $\cA$ is initialized with $d$ elements, case b) can only occur at most $d$ times given any sequence of data.

Finally, notice that Case (ii b) contributes at most at $d$ times $\log(n+1)$ to the loss.
On the other hand, $\log\tfrac{t+1}{t}$ is suffered for each $t$ of case (i) and (ii a),
which can be upper bounded by $\sum_{t=1}^n\log\tfrac{t+1}{t} = \log(n+1)$.
Together they give the desired upper bound $(d+1)\log(n+1)$.
\end{proof}

We also remark that Algorithm~\ref{alg:minplus} could have been defined so that $p_t(1;a_t)=1$ whenever $\bigwedge_{i\in\cA} a^i_t = 1$.
The reason we instead predicted $1$ with probability $\tfrac{t}{t+1}$ is that it allows Algorithm~\ref{alg:minplus} to avoid suffering an infinite loss if the data is not generated by some monotone conjunction, without meaningfully affecting the loss analysis.

\section{Online learning of $k$-CNF Boolean functions}

Finally, we describe how our techniques can be used to probabilistically predict the output of an unknown $k$-CNF function.
Given a set of $d$ variables $\{ x_1, \dots, x_d  \}$, a $k$-CNF Boolean function is a conjunction of clauses $c_1 \wedge c_2 \wedge \dots \wedge c_m$, where for $1 \leq y \leq m$, each clause $c_y$ is a disjunction of $k$ literals, with  each literal being an element from $\{ x_1, \dots, x_d, \neg x_1, \dots, \neg x_d  \}$.
The number of syntactically distinct clauses is therefore $(2d)^k$.
We will use the notation $\cC^k_d$ to denote the class of $k$-CNF Boolean formulas that can be formed from $d$ variables.

The task of probabilistically predicting a $k$-CNF Boolean function of $d$ variables can be reduced to that of probabilistically predicting a monotone conjunction over a larger space of input variables.
We can directly use the same reduction as used by \cite{Valiant:1984:TL:1968.1972} to show that the class of $k$-CNF Boolean functions is PAC-learnable.
The main idea is to first transform the given side information $a\in\cB^d$ into a new Boolean vector $c\in\cB^{(2d)^k}$, where each component of $c$ corresponds to the truth value for each distinct $k$-literal clause formed from the set of input variables $\{ a^i \}_{i=1}^d$, and then run either Algorithm~\ref{alg:monotone} or Algorithm~\ref{alg:minplus} on this transformed input.
In the case of Algorithm~\ref{alg:monotone}, this results in an online algorithm where each iteration takes $O(d^k)$ time; given $n$ examples, the algorithm runs in $O(n d^k)$ time and uses $O(n d^k)$ space.
Furthermore, if we denote the above process using either Algorithm~\ref{alg:monotone} or Algorithm~\ref{alg:minplus} as $\text{\sc Alg1}^k_d$ or $\text{\sc Alg2}^k_d$ respectively, then Theorems~\ref{thm:main_result} and~\ref{thm:minplus_bound} allows us to upper bound the loss of each approach as follows.

\begin{corollary}\label{thm:loss}
For all $n \in\mathbb{N}$, for all $k \in\mathbb{N}$, for any sequence of side information $a_{1:n}\in\cB^{n \times d}$, if $x_{1:n}$ is generated from a hypothesis $h^*\in\cC^k_d$, the loss of $\text{{\sc Alg1}}^k_d$ and $\text{{\sc Alg2}}^k_d$ with respect to $h^*$ satisfies the upper bounds $\cL_n(\text{{\sc Alg1}})  \leq 2^{2k+1} d^{2k}$ and $\cL_n(\text{{\sc Alg2}})  \leq \left( 2^k d^k + 1 \right) \log (n+1)$ respectively.
\end{corollary}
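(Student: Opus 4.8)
The plan is to verify that the clause-expansion reduction described above converts the $k$-CNF prediction problem into a monotone conjunction prediction problem over an enlarged variable set of size $D := (2d)^k$, so that Theorems~\ref{thm:main_result} and~\ref{thm:minplus_bound} apply verbatim with the ambient dimension $d$ replaced by $D$. Writing the two loss bounds in terms of $D$ and substituting $D = (2d)^k = 2^k d^k$ then yields the claimed inequalities after elementary algebra.

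First I would fix an enumeration $\psi_1, \dots, \psi_D$ of the $D$ syntactically distinct $k$-literal clauses over $\{x_1, \dots, x_d\}$, each regarded as a function $\psi_j : \cB^d \to \cB$, and define the transformed side information $c_t \in \cB^D$ by $c_t^j := \psi_j(a_t)$ for $1 \leq j \leq D$. I would then identify the target $h^* \in \cC^k_d$, which is a conjunction of some sub-collection of these clauses, with the index set $\cS^* := \{ j : \psi_j \text{~appears in~} h^* \} \subseteq \{1, \dots, D\}$, so that $\cS^* \in \cP_D$.

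The key step is to confirm that this reduction preserves the label-generation relationship. For every $t$ we have
\begin{equation*}
h^*(a_t) ~=~ \bigwedge_{j \in \cS^*} \psi_j(a_t) ~=~ \bigwedge_{j \in \cS^*} c_t^j ~=~ h_{\cS^*}(c_t),
\end{equation*}
so the monotone conjunction $h_{\cS^*}$ acting on $c_{1:n}$ generates exactly the target sequence $x_{1:n}$ that $h^*$ generates on $a_{1:n}$. Since $\text{{\sc Alg1}}^k_d$ and $\text{{\sc Alg2}}^k_d$ are, by construction, just $\zeta_D$ and $\pi_D$ run on the transformed inputs $c_{1:n}$, this identity places us squarely within the hypotheses of Theorems~\ref{thm:main_result} and~\ref{thm:minplus_bound} with ambient dimension $D$; note that $D = (2d)^k \geq 2$ for all $d,k \geq 1$, so the dimension restriction $d \in \mathbb{N} \setminus \{1\}$ of Theorem~\ref{thm:main_result} is satisfied.

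Finally I would read off the bounds and simplify: $\cL_n(\text{{\sc Alg1}}) = \cL_n(\zeta_D) \leq 2D^2 = 2(2d)^{2k} = 2^{2k+1} d^{2k}$ and $\cL_n(\text{{\sc Alg2}}) = \cL_n(\pi_D) \leq (D+1)\log(n+1) = (2^k d^k + 1)\log(n+1)$. The only genuine content here is the label-preservation identity; everything else is definitional bookkeeping and substitution. I therefore expect the sole obstacle to be stating the clause-to-variable correspondence cleanly enough that it visibly respects the assumption ``$x_{1:n}$ is generated by $h^*$'' for an \emph{arbitrary} input sequence $a_{1:n}$, rather than only for inputs drawn from some particular distribution.
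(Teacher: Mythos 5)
Your proposal is correct and follows exactly the route the paper intends: it instantiates the clause-expansion reduction with $D=(2d)^k=2^kd^k$, verifies the label-preservation identity $h^*(a_t)=h_{\cS^*}(c_t)$, and then applies Theorems~\ref{thm:main_result} and~\ref{thm:minplus_bound} in dimension $D$, which is precisely the (largely implicit) argument the paper gives for Corollary~\ref{thm:loss}. Your explicit check that $D=(2d)^k\geq 2$ satisfies the $d\in\mathbb{N}\setminus\{1\}$ hypothesis of Theorem~\ref{thm:main_result} is a detail the paper glosses over, and is a worthwhile addition.
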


It is also possible to provide similar reductions that can probabilistically predict Boolean functions formed from conjunctions or disjunction in $O(d)$ time and space: in the case of conjunctions, we expand the side information to also include $d$ additional terms, each of which corresponds to the negation of each $a_t^{i}$; disjunctions are handled by using De Morgan's law by first flipping the side information components and $x_t$, and then using the reduction technique for conjunctions.

\section{Experimental results}

We additionally performed some simulations to better understand our methods.
The source code to replicate all reported results can be found at: \url{jveness.info/software/kcnf.zip}.

\paragraph{Tightness of Theorem~\ref{thm:main_result} and Theorem~\ref{thm:minplus_bound}.}
We first investigated the tightness of the upper bounds given in Theorem~\ref{thm:main_result} and Theorem~\ref{thm:minplus_bound}.
The results are summarized in Table~\ref{tbl:results}.
For every $d\in\{ 2^k \}_{k=1}^8$, a random $\cS\in\cP_d$ was generated by first sampling a $\theta\in[0,1]$ uniformly at random, and then sampling a hypothesis corresponding to $\cS\in\cP_d$ distributed according to the probability mass function $f(\cS) := \theta^{|\cS|} (1 - \theta)^{d-{|\cS|}}$.
Next a Boolean vector of side information was generated by sampling from the uniform distribution over $\cB^d$, with each algorithm making a prediction, suffering a loss, and then seeing the correct label, for a total of $n=8192$ time steps.
This whole process was then repeated 1000 times, with the reported empirical loss corresponding to the maximum loss obtained across each of the 1000 repeats, for all $d$.
Table~\ref{tbl:results} suggests that the upper bound on the loss given in Theorem~\ref{thm:main_result} is reasonably tight, typically only 2-3 times higher than the highest observed empirical loss in our experiments.

\begin{table}[h!]
	\centering
		\begin{tabular}{| l || c | c | c | c | c | c | c | c |}
		\hline
		& 2 & 4 & 8 & 16 & 32 & 64 & 128 & 256 \\
		\hline
		Algorithm~\ref{alg:monotone} & 3.5 & 10.6 & 44.4 & 200.462 & 808.9 & 3745.8 & 15555.7 & 63623.4 \\
		Theorem~\ref{thm:main_result} & 8 & 32 & 128 & 512 & 2048 & 8192 & 32768 & 131072 \\
		\hline
		Algorithm~\ref{alg:minplus} & 11.9 & 19.5 & 36.8 & 75.8 & 97.1 & 98.4 & 111.3 & 116.0 \\
		Theorem~\ref{thm:minplus_bound} & 39 & 65 & 117 & 221 & 429 & 845 & 1677 & 3341 \\
		\hline
		\end{tabular}
	\caption{Empirical loss (in bits) of Algorithm~\ref{alg:monotone} and Algorithm~\ref{alg:minplus} versus their theoretical upper bounds for various dimensions (specific dimensions indicated in the column headings).}
	\label{tbl:results}
	\vspace{-0.25em}
\end{table}

\paragraph{Comparison of methods.}
Additionally, we compared Algorithms~\ref{alg:monotone} and~\ref{alg:minplus} to the work of \cite{Dash02exactmodel}, which uses an efficient model averaging technique to combine the outputs of $2^d$ different discriminative Naive Bayesian classifiers in $O(d)$ time per step; we give a brief overview of the technique as applied to our setting in Appendix \ref{app:madnb}.
We denote this method by MADNB (Model Averaging Discriminative Naive Bayes).
We used a similar setup as before, but also varied the number of timesteps $n$.
The results are shown in Table \ref{tbl:madnb_results}.
While it is difficult to draw too many conclusions, it does appear that the loss of MADNB scales linearly in our setting.

\begin{table}[t!]
	\centering
		\begin{tabular}{| l || c | c | c | c | c | c | c | c |}
		\hline
		& 2 & 4 & 8 & 16 & 32 & 64 & 128 & 256 \\
		\hline
		$n = 2048$  & 265.4 & 263.9 & 251.5 & 259.0 & 288.3 & 302.4 & 390.2 & 466 \\
		$n = 4096$  & 494.8 & 507.0 & 475.7 & 484.5 & 490.1 & 489.2 & 534.5 & 616.0 \\
		$n = 8192$  & 933.8 & 956.8 & 926.2 & 961.3 & 921.9 & 927.4 & 809.4 & 1133.43 \\
		$n = 16384$ & 1817.0 & 1808.4 & 1810.3 & 1807.8 & 1790.2 & 1791.7 & 1821.7 & 1993.6 \\
		$n = 32768$ & 3549.0 & 3554.5 & 3528.8 & 3538.3 & 3496.8 & 3562.8 & 3646.1 & 3821.8 \\
		\hline
		\end{tabular}
	\caption{Empirical loss of MADNB (in bits).}
	\label{tbl:madnb_results}
	\vspace{-1em}
\end{table}

Furthermore, we compared MADNB to Algorithm \ref{alg:minplus} using the $k$-CNF reduction on two binary classification tasks, each taken from the UCI repository \citep{BacheLichman2013}.
For the first task we used the Mushroom Dataset, and for the second we used King-Rook vs King Pawn chess endgames.
These data sets were chosen by filtering the list of available datasets to those that dealt with binary classification tasks, containing only categorical input features, and who had more than 1000 instances.
We were also careful to check that the results were qualitatively the same under different permutations of the instances and switching of the class labels.
The results are summarized in Tables \ref{tbl:uci_results} and \ref{tbl:uci_results2}.
Here we see two main effects.
First is that the real world performance of MADNB is much better than the worst case results presented earlier.
Secondly, we see the benefit of expanding the model class by increasing $k$; it seems that for both datasets, $k=3$ is sufficient for achieving superior or comparable predictive accuracy to MADNB.

\begin{table}[h!]
	\centering
		\begin{tabular}{| l || c | c | c | c |}
		\hline
		Method & Accuracy & Correct & Mistakes & Total Log Loss (bits)\\
		\hline
		MADNB     & 93.17 & 7569 & 555 & 3158.01 \\
		monotone  & 48.44 & 3935 & 4189 & 48494.4 \\
		$1$-CNF   & 48.67 & 3954 & 4170 & 48313 \\
		$2$-CNF   & 93.27 & 7577 & 547 & 6156.91 \\
		$3$-CNF   & \bf{98.58} & \bf{8009} & \bf{115} & \bf{844.51} \\
		$4$-CNF   & 97.49 & 7920 & 204 & 1645.48 \\
		\hline
		\end{tabular}
	\caption{Results on the Mushroom Data Set.}
	\label{tbl:uci_results}
\end{table}

\begin{table}[h!]
	\centering
		\begin{tabular}{| l || c | c | c | c |}
		\hline
		Method & Accuracy & Correct & Mistakes & Total Log Loss (bits)\\
		\hline
		MADNB     & \bf{88.36} & \bf{2824} & \bf{372} & \bf{1366.06} \\
		monotone  & 52.07 & 1664 & 1532 & 15616.3 \\
		$1$-CNF   & 54.57 & 1744 & 1452 & 14836.1 \\
		$2$-CNF   & 80.51 & 2573 & 623 & 6311.06 \\
		$3$-CNF   & 85.58 & 2735 & 461 & 4311.65 \\
		$4$-CNF   & 77.47 & 2476 & 720 & 6848.97 \\
		\hline
		\end{tabular}
	\caption{Results on the King-Rook vs King-Pawn Dataset.}
	\label{tbl:uci_results2}
	\vspace{-1em}
\end{table}

\section{Discussion}

In terms of our methods practical utility, we envision our technique being most useful as component of a larger predictive ensemble.
To give a concrete example, consider the statistical data compression setting, where the cumulative log-loss under some probabilistic model directly corresponds to the size of a file encoded using arithmetic encoding \citep{Witten87}.
Many strong statistical data compression techniques work by adaptively combining the outputs of many different probabilistic models.
For example, the high performance PAQ compressor uses a technique known as geometric mixing \citep{Mattern:2013:LGM:2495257.2495873}, to combine the outputs of many different contextual models in a principled fashion.
Adding our technique to such a predictive ensemble would give it the property that it could exploit $k$-CNF structure in places where it exists.

\section{Conclusion}

This paper has provided two efficient, low-loss algorithms for probabilistically predicting targets generated by some unknown $k$-CNF Boolean function of $d$ Boolean variables in time (for fixed $k$) polynomial in $d$.
The construction of Algorithm~\ref{alg:monotone} is technically interesting in the sense that it is a hybrid Bayesian technique, which performs full Bayesian inference only on the positive examples, with a prior carefully chosen so that the loss suffered on negative examples is kept small.
This approach may be potentially useful for more generally applying the ideas behind Bayesian inference or exponential weighted averaging in settings where a direct application would be computationally intractable.
The more practical Algorithm~\ref{alg:minplus} is less interpretable, but has $O(d)$ space complexity and a per instance time complexity of $O(d)$, while enjoying a loss within a multiplicative $\log n$ factor of the intractable Bayesian predictor using a uniform prior.

\acks{
The authors would like to thank the following people:
Brendan McKay for providing the proof of Theorem \ref{thm:upowhard};
Kee Siong Ng, for the suggestion many years ago to investigate the class of $k$-CNF formulas from an online, probabilistic perspective; and finally Julien Cornebise and Marc Bellemare for some helpful comments and discussions.
}

\section*{References}\label{sec:Bib}
\addcontentsline{toc}{section}{\refname}
\def\refname{\vspace{-4ex}}
\bibliographystyle{alpha}
\begin{small}

\end{small}

\appendix
\section{MADNB Description}\label{app:madnb}
\newcommand{\madnb}{\text{\sc madnb}}
\newcommand{\nb}{\text{\sc nb}}
\newcommand{\dirichlet}{\text{\sc kt}}
\newcommand{\kt}{\text{\sc kt}}

Here we describe the MADNB method of \cite{Dash02exactmodel} as applied to our setting.
At a high level, the algorithm combines the predictions made by many Naive Bayes classifiers.
Each classifier is described by a set of feature indices $\cS \subseteq\{1,...,d\}$, with the feature values at time $t$ defined to be $\llbracket a^i_t \rrbracket$ for all $i \in \cS$.
The MADNB technique aggregates the predictions made by the $2^d$ possible Naive Bayes classifiers corresponding to some subset of features $\cS\subseteq\{1,...,d\}$.

Our presentation will first describe how the parameters can be learnt online using the KT estimator \citep{krichevsky1981pue} for a Naive Bayes classifier containing only binary features, before moving on to describe the MADNB technique.

\paragraph{Parameter learning.}
Consider a sequence $x_{1:n} \in \{0,1\}^n$ generated by successive Bernoulli trials.
If $a$ and $b$ denote the number of zeroes and ones in $x_{1:n}$ respectively, and $\theta \in [0,1]$ denotes the probability of observing a 1 on any given trial, then $\Pr(x_{1:n}|\theta) = \theta^b (1-\theta)^a$.
One way to construct a distribution over $x_{1:n}$, in the case where $\theta$ is unknown, is to weight over the possible values of $\theta$.
The KT-estimator uses the weighting $w(\theta) := \text{Beta($\tfrac{1}{2}$,$\tfrac{1}{2}$)} = \pi^{-1} \theta^{-1/2}(1-\theta)^{-1/2}$, which assigns the probability
$\kt(x_{1:n}) := \int_0^1 \theta^b(1-\theta)^a w(\theta)\,d\theta$.
This quantity can be efficiently computed online by maintaining the $a$ and $b$ counts incrementally and using the chain rule, that is,
$\Pr(x_{n+1}=1|x_{1:n})=1-\Pr(x_{n+1}=0|x_{1:n}) = (b+1/2)/(n+1)$.

\paragraph{Online Naive Bayes Classifier.}
Given binary features described by the indices in $\cS$, if we define
\begin{equation*}
  \nb_\cS(x_{1:t}, a_{1:t})
  ~=~ \dirichlet(x_{1:t})
  \prod_{i \in \cS} \dirichlet_0(a_{1:t}^i | x_{1:t}) \dirichlet_1(a_{1:t}^i | x_{1:t})
  \prod_{i \not\in \cS} \dirichlet(x_{1:t})
\end{equation*}
the probabilistic prediction made by an online Naive Bayes model at time $t$ is given by
\begin{equation*}
\nb_\cS(x_t | x_{<t} ; a_{1:t} ) ~:=~
\frac{\nb_\cS(x_{1:t}, a_{1:t})}{\sum_{y \in \cB} \nb_\cS(x_{<t}y, a_{1:t})},
\end{equation*}
where $\dirichlet_j(a_{1:t}^i | x_{1:t}) := \kt\left(f_j(x_{1:t},
a^i_{1:t})\right)$ and $f_j(x_{1:t}, a^i_{1:t})$ returns the
subsequence of $a^i_{1:t}$ described by the indices $1 \leq k \leq
t$ satisfying $x_k = j$ (of length $a$ for $j=0$ and length $b$ for $j=1$).

\paragraph{MADNB.}
The uniform Bayesian mixture over all $\cS\subseteq\{1,...,d\}$
is defined as
\begin{equation*}
  \nb_*(x_{1:t}, a_{1:t}) ~:=~ \sum_{\cS \in \cP_d} 2^{-d} \, \nb_\cS(x_{1:t}, a_{1:t})
\end{equation*}
The MADNB predictor can now be defined, by at time $t$ assigning
the probability
\begin{equation}\label{eq:madnb_defn}
\madnb(x_t | x_{<t} ; a_{1:t}) ~:=~ \frac{ \nb_*(x_{1:t}, a_{1:t}) }{ \sum_{y \in \cB} \nb_*(x_{<t}y, a_{1:t}) }.
\end{equation}
Equation \ref{eq:madnb_defn} can be computed efficiently, using the identity
\begin{equation*}
  \nb_*(x_{1:t}, a_{1:t})
  ~=~ \dirichlet(x_{1:t}) \prod_{i=1}^d \left[ \frac{1}{2} \dirichlet(a_{1:t}^i) + \frac{1}{2} \dirichlet_0(a_{1:t}^i | x_{1:t}) \dirichlet_1(a_{1:t}^i | x_{1:t}) \right].
\end{equation*}
Note that the MADNB techniques differs from a pure generative (online as well as offline) Bayesian approach, in the sense that
\begin{equation*}
\sum_{\cS \in \cP_d} \frac{1}{2^d} \prod_{t=1}^n \nb_\cS(x_t | x_{<t} ; a_{1:t} )
~\neq~
\prod_{t=1}^n \madnb(x_t | x_{<t} ; a_{1:t})
~\neq~
{\nb_*(x_{1:t}, a_{1:t})\over\sum_{a_{1:n}}\nb_*(x_{1:t}, a_{1:t})}
\end{equation*}
but has the advantage of being computationally tractable.

\section{List of Notation}\label{app:Notation}

\begin{tabbing}
  \hspace{0.13\textwidth} \= \hspace{0.73\textwidth} \= \kill
  {\bf Symbol }      \> {\bf Explanation}                                                                        \\[0.5ex]
  $\cB$              \> $\{\text{false},\text{true}\}=\{\bot,\top\}=\{0,1\}$                                     \\[0.5ex]
  $\llbracket\cB\rrbracket$ \> is 1 if $\cB$ is true, and 0 if $\cB$ is false                                        \\[0.5ex]
  $i$                \> side information vector index $\in\{1,\dots,d\}$                                         \\[0.5ex]
  $t$                \> data item index $\in\{1,...,n\}$                                                         \\[0.5ex]
  $\text{$k$-CNF}$   \> $k$-Conjunctive Normal Form                                                              \\[0.5ex]
	$d$                \> side information dimensionality                                                          \\[0.5ex]
  $a$                \> side information $\in\{0,1\}^d$                                                          \\[0.5ex]
  $a_t^{i}$          \> Boolean side information $i$ at time $t$                                                 \\[0.5ex]
  $x_t\in\cB$        \> binary label at time $t$                                                                 \\[0.5ex]
  $\cS$              \> subset of $\{1,...,d\}$; indices of positive literals in monotone conjunction             \\[0.5ex]
  $h_\cS(a_t)$       \> monotone conjunction hypothesis $\bigwedge_{i\in\cS} a_t^i$                              \\[0.5ex]
  $\cS_n'$           \> the MAP model at time $n$, where $\cS_n' \subseteq\{1,\dots,d\}$                         \\[0.5ex]
  $\xi_d^\a$         \> Bayesian mixture over monotone conjunctions for positive data                            \\[0.5ex]
  $\xi_d^+$          \> heuristic Bayesian predictor for arbitrary data that learns only from positive data      \\[0.5ex]
  $\zeta_d$          \> Algorithm \ref{alg:monotone}, the hybrid Bayesian+memorization predictor                 \\[0.5ex]
	$\pi_d$            \> Algorithm \ref{alg:minplus}, the memory efficient monotone conjunction predictor         \\[0.5ex]
  $\nu_\cS$          \> deterministic predictor corresponding to monotone conjunction $h_\cS$                    \\[0.5ex]
  $w(\cS)$           \> prior weight of the monotone conjunction defined by $\cS \subseteq\{1,...,d\}$           \\[0.5ex]
  $\a$               \> hyper-parameter controlling bias toward smaller/larger formulas                          \\[0.5ex]
  $\e$               \> base of natural logarithm                                                                \\[0.5ex]
  $\log$             \> binary logarithm                                                                         \\[0.5ex]
  $\ln$              \> natural logarithm                                                                        \\[0.5ex]
	$\cL_n(\rho)$      \> cumulative log-loss of $\rho$ at time $n$                                         \\[0.5ex]
	$\cC^k_d$          \> the class of $k$-CNF Boolean formulas that can be formed from $d$ variables              \\[0.5ex]
	$\cA$              \> the incrementally maintained set of negative examples in Algorithm \ref{alg:monotone}    \\[0.5ex]
	$\cD_t$            \> indices of variables not ruled out from the positive examples occurring before time $t$ \\[0.5ex]
	$\cP_d$            \> the powerset of $\{ 1, \dots, d \}$                                                      \\[0.5ex]
	$\kt$              \> KT estimator                                                                             \\[0.5ex]
	$\nb_\cS$          \> Naive Bayes predictor, using feature set $\cS$                                           \\[0.5ex]
	$\madnb$           \> Model averaging discriminative naive Bayes predictor                                     \\[0.5ex]
\end{tabbing}

\end{document}